\def \Oind{s}
\def \OindLen{S}
\def \Iind{t}
\def \IindLen{m}
\def \tbtheta{\tilde{\btheta}}
\def \dd {\text{d}}
\newcommand{\la}{\langle}
\newcommand{\ra}{\rangle}
\def \Var {\text{Var}}
\def \consVariance{\Psi}
\def \consLip{G}
\def \consHes{M}
\def \consVarBound{\sigma^2}
\def \consVarWeight{W}
\def \consGrad{C_g}
\def \consGradLip{L_g}
\titlespacing*{\section}{0pt}{*0.1}{*0.1}
\titlespacing*{\subsection}{0pt}{*0.1}{*0.1}
\titlespacing*{\subsubsection}{0pt}{*0.1}{*0.1}
\title{An Improved Convergence Analysis of Stochastic Variance-Reduced Policy Gradient\thanks{\ \ To appear in the proceedings of the 35th International Conference on Uncertainty in Artificial Intelligence.}}
\author{} 
\author{ {\bf Pan Xu} \\
Department of Computer Science\\
University of California, Los Angeles\\
Los Angeles, CA 90095\\
\And
{\bf Felicia Gao} \\
Department of Computer Science\\
University of California, Los Angeles\\
Los Angeles, CA 90095\\
\And
{\bf Quanquan Gu}   \\
Department of Computer Science\\
University of California, Los Angeles\\
Los Angeles, CA 90095\\
}
\begin{document}
\maketitle

\begin{abstract}
We revisit the stochastic variance-reduced policy gradient (SVRPG) method proposed by \citet{papini2018stochastic} for reinforcement learning. We provide an improved convergence analysis of SVRPG and show that it can find an $\epsilon$-approximate stationary point of the performance function 
within $O(1/\epsilon^{5/3})$ trajectories. 
This sample complexity improves upon the best known result $O(1/\epsilon^2)$ by a factor of $O(1/\epsilon^{1/3})$. At the core of our analysis is (i) a tighter upper bound for the variance of importance sampling weights, where we prove that the variance can be controlled by the parameter distance between different policies; and (ii) a fine-grained analysis of the epoch length and batch size parameters such that we can significantly reduce the number of trajectories required in each iteration of SVRPG. We also empirically demonstrate the effectiveness of our theoretical claims of batch sizes on  reinforcement learning benchmark tasks. 
\end{abstract}

\section{INTRODUCTION}

Reinforcement learning (RL) is a sequential decision process that learns the best actions to solve a task by repeated, direct interaction with the environment \citep{sutton2018reinforcement}. In detail, an RL agent starts at one state and sequentially takes an action according to a certain policy, observes the resulting reward signal, and lastly, evaluates and improves its policy before it transits to the next state. A policy tells the agent which action to take at each state. Therefore, a good policy is critically important in a RL problem. Recently, policy gradient methods \citep{sutton2000policy} have achieved impressive successes in many challenging deep reinforcement learning applications \citep{kakade2002natural,schulman2015trust}, which directly optimizes the performance function $J(\btheta)$ (We will formally define it later) over a class of policies parameterized by some model parameter $\btheta$. In particular, policy gradient methods seek to find the best policy $\pi_{\btheta}$ that maximizes the expected return of the agent. They are generally more effective in the high-dimensional action space and enjoy the additional flexibility of stochasticity, compared with deterministic value-function based methods such as Q-learning and SARSA \citep{sutton2000policy}. 

In many RL applications, the performance function $J(\btheta)$ is non-concave and the goal is to find a stationary point $\btheta^*$ such that $\|\nabla J(\btheta^*)\|_2=0$ using gradient based algorithms. Due to the specialty of reinforcement learning, the objective function $J(\btheta)$ is calculated based on cumulative rewards arriving in a sequential way, which makes it impossible to calculate the full gradient directly. Therefore, most algorithms such as REINFORCE \citep{williams1992simple} and GPOMDP \citep{baxter2001infinite} need to actively sample trajectories to approximate the gradient $\nabla J(\btheta)$. This resembles the stochastic gradient (SG) based algorithms in stochastic optimization \citep{robbins1951stochastic} which require $O(1/\epsilon^2)$ trajectories to obtain $\EE[\|\nabla J(\btheta)\|_2^2]\leq\epsilon$ 
Due to the large variances caused by stochastic gradient, the convergence of SG based methods can be rather sample inefficient when the required precision $\epsilon$ is very small.  

To mitigate the negative effect of large variance on the convergence of SG methods, a large class of stochastic variance-reduced gradient (SVRG) algorithms were proposed for both convex \citep{johnson2013accelerating,xiao2014proximal,harikandeh2015stopwasting,nguyen2017sarah} and nonconvex \citep{allen2016variance,reddi2016stochastic,lei2017nonconvex,li2018simple,fang2018spider,zhou2018stochastic_nips} objective functions. SVRG has proved to achieve faster convergence in terms of the total number of stochastic gradient evaluations. These variance-reduced algorithms have since been applied to reinforcement learning in policy evaluation \citep{du2017stochastic}, trust-region policy optimization \citep{xu2017stochastic} and policy gradient \citep{papini2018stochastic}. In particular, \citet{papini2018stochastic} recently proposed a stochastic variance-reduced policy gradient (SVRPG) algorithm that marries SVRG to policy gradient for reinforcement learning. The algorithm saves on sample computation and improves the performance of the vanilla policy gradient methods based on SG. However, from a theoretical perspective, the authors only showed that SVRPG converges to a stationary point within $\EE[\|\nabla J(\btheta)\|_2^2]\leq\epsilon$ with $O(1/\epsilon^2)$ stochastic gradient evaluations (trajectory samples), which in fact only matches the sample complexity of SG based policy gradient methods. This leaves open the important question:

\textit{Can SVRPG be provably better than SG based policy gradient methods?}

We answer this question affirmatively and fill this gap between theory and practice in this paper. Specifically, we provide a sharp convergence analysis of SVRPG and show that it only requires $O(1/\epsilon^{5/3})$ stochastic gradient evaluations 
in order to converge to a stationary point $\btheta$ of the performance function, i.e., $\EE[\|\nabla J(\btheta)\|_2^2]\leq\epsilon$. This sample complexity of SVRPG is strictly lower than that of SG based policy gradient methods by a factor of $O(1/\epsilon^{1/3})$. By the same argument, our result is also better than the sample complexity provided in \citet{papini2018stochastic} by a factor of $O(1/\epsilon^{1/3})$. 
The key ideas in our theoretical analysis are twofold: (i) we prove a key lemma that controls the variance of importance weights introduced in SVRPG to deal with the non-stationarity of the sample distribution in reinforcement learning. This helps offset the additional variance introduced by importance sampling; and (ii) we provide a refined proof of the convergence of SVRPG and carefully investigate the trade-off between the convergence rate and computational efficiency of SG methods. This enables us to choose a smaller batch size to reduce the sample complexity while maintaining the convergence rate. In addition, we demonstrate the advantage of SVRPG over GPOMDP and validate our theoretical results on Cartpole and Mountain Car problems.


\textbf{Notation} In this paper, scalars, vectors and matrices are denoted by lower case, lower case bold face, and upper case bold face letters respectively. We use $\|\vb\|_2$ and $\|\Ab\|_2$ to denote the vector $2$-norm of a vector $\vb\in\RR^d$ and the spectral norm of a matrix $\Ab\in\RR^{d\times d}$ respectively. We denote $a_n=O(b_n)$ if $a_n\leq Cb_n$ for some constant $0<C$. For $\alpha>0$, the R\'{e}nyi divergence \citep{renyi1961measures} between distributions $P,Q$ is 
\begin{align*}
    D_{\alpha}(P||Q)=\frac{1}{\alpha-1}\log_2 \int_{x}P(x)\bigg(\frac{P(x)}{Q(x)}\bigg)^{\alpha-1}\dd x,
\end{align*}
which is non-negative for all $\alpha>0$. The exponentiated R\'{e}nyi divergence is defined as $d_{\alpha}(P||Q)=2^{D_{\alpha}(P||Q)}$.

\section{ADDITIONAL RELATED WORK}\label{sec:related}

In this section, we review additional relevant work that is not discussed in the introduction.

Deep RL models \citep{mnih2015human} have been popular in solving complex problems such as robot locomotion, playing grandmaster skill-level Go, and safe autonomous driving \citep{levine2015learning, silver2016mastering-go, shalev2016auto-driving}. 
Policy gradient \citep{sutton2000policy} is one of the most effective algorithms, 
where the policy is usually approximated by linear functions or nonlinear functions such as neural networks, and can be both stochastic and deterministic \citep{silver2014deterministic}. One major drawback of traditional policy gradient methods such as REINFORCE \citep{williams1992simple}, GPOMDP \citep{baxter2001infinite} and TRPO \citep{schulman2015trust} is the large variance caused in the estimation of the gradient \citep{sehnke2010parameter}, which leads to a poor convergence performance in practice. One way of reducing the variance in gradient estimation is to introduce various baselines as control variates \citep{weaver2001optimal,greensmith2004variance,peters2008reinforcement,gu2017q,tucker2018mirage}. \citep{pirotta2013adaptive} proposed to use adaptive step size to offset the effect of variance of the policy. \citet{papini2017adaptive} further studied the adaptive batch size used to approximate the gradient and proposed to jointly optimize the adaptive step size and batch size. It has also been extensively studied to reduce the variance of policy gradient by importance sampling \citep{liu2008monte,cortes2010learning}.  \citet{metelli2018policy} reduced the variance caused by importance sampling by deriving a surrogate objective with a Renyi penalty. 

\section{PRELIMINARIES}\label{sec:preliminary}

In this section, we introduce the preliminaries on reinforcement learning and policy gradient.\\
\textbf{Markov Decision Process:} We will model the reinforcement learning task as a discrete-time Markov Decision Process (MDP): $M=\{\cS,\cA,\cP,\cR,\gamma,\rho\}$, where $\cS$ is the state space and $\cA$
is the action space. $\cP(s'|s,a)$ defines the probability that the agent transits to state $s'$ when taking action $a$ in state $s$. The reward function $\cR(s,a):\cS\times\cA\mapsto[0,R]$ gives the reward after the agent takes action
$a$ at state $s$ for some constant $R>0$, and  $\gamma \in (0,1)$ is the discount factor. $\rho$ is the initial state distribution. The probability that the agent chooses action $a$ at state $s$ is modeled by its policy $\pi(a|s)$. Following any stationary policy, the agent can observe and collect a trajectory $\tau=\{s_0,a_0,s_1,a_1,\ldots,s_{H-1},a_{H-1},s_H\}$ which is a sequence of state-action pairs, where $H$ is the trajectory horizon. Along with the state-action pairs, the agent also observes an cumulative discounted reward
\begin{align}\label{eq:accumu_reward}
    \textstyle{\cR(\tau)=\sum_{h=0}^{H-1}\gamma^{h}\cR(s_h,a_h).}
\end{align}
\textbf{Policy Gradients:} Suppose that the policy $\pi$ is parameterized by an unknown parameter $\btheta\in\RR^d$ and denoted by $\pi_{\btheta}$. We denote the distribution induced by policy $\pi_{\btheta}$ as $p(\tau|\pi_{\btheta})$, also referred to as $p(\tau|\btheta)$ for simplicity. Then
\begin{align}\label{eq:def_distribution_tau}
    p(\tau|\btheta)=\rho(s_0)\prod_{h=0}^{H-1}\pi_{\btheta}(a_h|s_h)P(s_{h+1}|s_h,a_h).
\end{align}
To measure the performance of a given policy $\pi_{\btheta}$, we define the expected total reward under this policy as $J(\btheta)=\EE_{\tau\sim p(\cdot|\btheta)}[\cR(\tau)|M]$. Taking the gradient of $J(\btheta)$ with respect to $\btheta$ gives
\begin{align}\label{eq:def_full_grad}
    \nabla_{\btheta} J(\btheta)&=\int_{\tau}\cR(\tau)\nabla_{\btheta}p(\tau|\btheta)\dd\tau\notag\\
    &=\int_{\tau}\cR(\tau)\frac{\nabla_{\btheta}p(\tau|\btheta)}{p(\tau|\btheta)}p(\tau|\btheta)\dd\tau\notag\\
    &=\EE_{\tau\sim p(\cdot|\btheta)}[\nabla_{\btheta}\log p(\tau|\btheta) \cR(\tau)|M].
\end{align}
We can update the policy by running gradient ascent based algorithms on $\btheta$. However, it is impossible to calculate the full gradient in reinforcement learning. In particular, policy gradient samples a batch of trajectories $\{\tau_i\}_{i=1}^{N}$ to approximate the full gradient in \eqref{eq:def_full_grad}. At the $k$-th iteration, the policy is then updated by
\begin{align}\label{eq:def_gd}
    \btheta_{k+1}=\btheta_k+\eta\hat\nabla_N J(\btheta_k),
\end{align}
where $\eta>0$ is the step size and the estimated gradient $\hat\nabla_N J(\btheta_k)$ is an approximation of \eqref{eq:def_full_grad} based on trajectories $\{\tau_i\}_{i=1}^{N}$, which is defined as follows
\begin{align*}
    \hat\nabla_N J(\btheta)=\frac{1}{N}\sum_{i=1}^N\nabla_{\btheta}\log p(\tau_i|\btheta)\cR(\tau_i).
\end{align*}
According to \eqref{eq:def_distribution_tau}, we know that $\nabla_{\btheta}\log p(\tau_i|\btheta)$ is independent of the transition matrix $P$. Therefore, combining this with \eqref{eq:accumu_reward} yields
\begin{align*}
    &\hat\nabla_N J(\btheta)\notag\\
    &=\frac{1}{N}\sum_{i=1}^N\underbrace{\Bigg[\sum_{h=0}^{H-1}\nabla_{\btheta}\log \pi_{\btheta}(a_h^i|s_h^i)\Bigg]\Bigg[\sum_{h=0}^{H-1}\gamma^h\cR(s_h^i,a_h^{i})\Bigg]}_{g(\tau_i|\btheta)},
\end{align*}
where $\tau_i=\{s_0^i,a_0^i,s_1^i,a_1^i,\ldots,s_{H-1}^i,a_{H-1}^i,s_H^i\}$ for all $i=1,\ldots,N$ are sampled from policy $\pi_{\btheta}$, and $g(\tau_i|\btheta)$ is the unbiased gradient estimator based on sample $\tau_i$. Then we can rewrite the gradient in \eqref{eq:def_gd} as $\hat\nabla_N J(\btheta)=1/N\sum_{i=1}^N g(\tau_i|\btheta)$. 
Based on the above estimator, we can obtain the most well-known gradient estimators for policy gradient such as REINFORCE \citep{williams1992simple} and GPOMDP \citep{baxter2001infinite}. In particular, the REINFORCE estimator introduces an additional term $b$ as the constant baseline:
\begin{align}\label{eq:def_reinforce}
    &g(\tau_i|\btheta)\\
    &=\Bigg[\sum_{h=0}^{H-1}\nabla_{\btheta}\log \pi_{\btheta}(a_h^i|s_h^i)\Bigg]\Bigg[\sum_{h=0}^{H-1}\gamma^h\cR(s_h^i,a_h^{i})-b\Bigg].\notag
\end{align}
GPOMDP is a refined estimator of REINFORCE based on the fact that the current action does not affect previous decisions:
\begin{align}\label{eq:def_GPOMDP}
    &g(\tau_i|\btheta)\\
    &=\sum_{h=0}^{H-1}\bigg(\sum_{t=0}^{h}\nabla_{\btheta}\log \pi_{\btheta}(a_t^i|s_t^i)\bigg)\big(\gamma^h r(s_h^i,a_h^{i})-b_h\big).\notag
\end{align}

\section{ALGORITHM}\label{sec:alg}
In each iteration of the gradient ascent update \eqref{eq:def_gd}, policy gradient methods need to sample a batch of trajectories to estimate the expected gradient. This subsampling introduces a high variance and undermines the convergence speed of the algorithm. Inspired by the success of stochastic variance-reduced gradient (SVRG) techniques in stochastic optimization \citep{johnson2013accelerating,reddi2016stochastic,allen2016variance}, \citet{papini2018stochastic} proposed a stochastic variance reduced policy gradient (SVRPG) method, which is displayed in Algorithm~\ref{alg:svrg_pg}.

SVRPG consists of multiple epochs. At the beginning of the $\Oind$-th epoch, it treats the current policy as a reference point denoted by $\tbtheta^{\Oind}=\btheta_{0}^{\Oind+1}$. It then computes a gradient estimator $\mu_{\Oind}=1/N\sum_{i=1}^N g(\tau_i|\tbtheta^{\Oind})$ based on $N$ trajectories $\{\tau_i\}_{i=1}^N$ sampled from the current policy, where $g(\tau_i|\tbtheta^{\Oind})$ is the REINFORCE or GPOMDP estimator. At the $\Iind$-th iteration within the $\Oind$-th epoch, SVRPG samples $B$ trajectories $\{\tau_j\}_{j=1}^B$ based on the current policy $\btheta_{\Iind}^{\Oind+1}$. Then it updates the policy based on the following semi-stochastic gradient
\begin{align}\label{eq:def_semi_gradient}
    \vb_{\Iind}^{\Oind+1}
    &= \frac{1}{B} \textstyle{\sum^{B}_{j=1}}  g(\tau_j|\btheta_{\Iind}^{\Oind+1})\notag\\
    &\quad+\mu_{\Oind}-\frac{1}{B} \sum^{B}_{j=1}  \omega(\tau_j|\tbtheta^{\Oind},\btheta_{\Iind}^{\Oind+1}) g(\tau_j|\tbtheta^{\Oind}) , 
\end{align}
where the last two terms serve as a correction to the subsampled gradient estimator which reduces the variance and improves the convergence rate of Algorithm \ref{alg:svrg_pg}. It is worth noting that the semi-stochastic gradient in \eqref{eq:def_semi_gradient} differs from the common one used in SVRG due to the additional term $\omega(\tau|\tbtheta^{\Oind},\btheta_{\Iind}^{\Oind+1})=p(\tau|\tbtheta^{\Oind})/p(\tau|\btheta_{\Iind}^{\Oind+1})$, which is called the importance sampling weight from $p(\tau|\btheta_{\Iind}^{\Oind+1})$ to $p(\tau|\tbtheta^{\Oind})$. This term is important in reinforcement learning due to the non-stationarity of the distribution of $\tau$. Specifically, $\{\tau_i\}_{i=1}^N$ are sampled  from $\tbtheta^{\Oind}$ while $\{\tau_j\}_{j=1}^B$ are sampled based on $\btheta_{\Iind}^{\Oind+1}$. Nevertheless, we have
\begin{align*}
    \EE_{\pi_{\btheta_{\Iind}^{\Oind+1}}}\big[\omega(\cdot|\tbtheta^{\Oind},\btheta_{\Iind}^{\Oind+1}) g(\cdot|\tbtheta^{\Oind})\big]
    &=\EE_{\pi_{\tbtheta^{\Oind}}}\big[ g(\cdot|\tbtheta^{\Oind})\big],
\end{align*}
which ensures the correction term is zero mean and thus $\vb_{\Iind}^{\Oind+1}$ is an unbiased gradient estimator.


\begin{algorithm}[ht]
\caption{SVRPG} 
\label{alg:svrg_pg}
\begin{algorithmic}[1]
\STATE \textbf{Input:} number of epochs $\OindLen$, epoch size $\IindLen$, step size $\eta$, batch size $N$, mini-batch size $B$, gradient estimator $g$, initial parameter $\btheta_m^0 := \tbtheta^0 := \btheta_0$
\FOR{$\Oind=0,\ldots,\OindLen-1$}
\STATE $\btheta_{0}^{\Oind+1}=\tbtheta^{\Oind}=\btheta_{\IindLen}^{\Oind}$
\STATE Sample $N$ trajectories $\{\tau_i\}$ from $p(\cdot|\tbtheta^{\Oind})$
\STATE $\mu_{\Oind}=\hat\nabla_{N}J(\tbtheta^{\Oind}):=\frac{1}{N}\sum_{i=1}^N g(\tau_i|\tbtheta^{\Oind})$
\FOR{$\Iind=0,\ldots,\IindLen-1$}
\STATE Sample $B$ trajectories $\{\tau_j\}$ from $p(\cdot|\btheta^{\Oind+1}_{\Iind})$
\STATE $\vb^{\Oind+1}_{\Iind}=\mu_{\Oind}+\frac{1}{B}\sum_{j=1}^{B}\big(g\big(\tau_j|\btheta^{\Oind+1}_{\Iind}\big)-\omega\big(\tau_j|\tbtheta^{\Oind},\btheta^{\Oind+1}_{\Iind}\big)g\big(\tau_j|\tbtheta^{\Oind}\big)\big)$
\STATE $\btheta_{\Iind+1}^{\Oind+1}=\btheta_{\Iind}^{\Oind+1}+\eta\vb_{\Iind}^{\Oind+1}$
\ENDFOR 
\ENDFOR 
\RETURN $\btheta_{\text{out}}$: uniformly picked from $\{\btheta_{\Iind}^{\Oind}\}$ for $\Iind=0,\ldots,\IindLen;\Oind=0,\ldots,\OindLen$
\end{algorithmic} 
\end{algorithm}

\section{THEORY}\label{sec:theory}
In this section, we are going to provide a sharp analysis of Algorithm \ref{alg:svrg_pg}. We first lay down the following common assumption on the log-density of the policy function.  
\begin{assumption}\label{assump:smooth}
Let $\pi_{\btheta}(a|s)$ be the policy of an agent at state $s$. There exist constants $G,M>0$ such that the log-density of the policy function satisfies
\begin{align*}
    \|\nabla_{\btheta}\log \pi_{\btheta}(a|s)\|\leq \consLip,\quad \big\|\nabla_{\btheta}^2\log \pi_{\btheta}(a|s)\big\|_2\leq \consHes,
\end{align*}
for all $a\in\cA$ and $s\in\cS$.
\end{assumption}
In many real-world problems, we require that policy parameterization to change smoothly over time instead of drastically. Assumption \ref{assump:smooth} is an important condition in nonconvex optimization \citep{reddi2016stochastic,allen2016variance}, which guarantees the smoothness of the objective function $J(\btheta)$. Our assumption is slightly different from that in \citet{papini2018stochastic}, which assumes that $\frac{\partial}{\partial\theta_i}\log \pi_{\btheta}(a|s)$ and $\frac{\partial^2}{\partial\theta_i\partial \theta_j} \log \pi_{\btheta}(a|s)$ are upper bounded elementwisely. It can be easily verified that our Assumption \ref{assump:smooth} is milder than theirs. It should also be noted that although in reinforcement learning we make the assumptions on the parameterized policy, there is no difference in imposing the smoothness assumption on the performance function $J(\btheta)$ directly. In fact, Assumption \ref{assump:smooth} implies the following proposition on $J(\btheta)$.
\begin{proposition}\label{prop:smooth_obj}
Under Assumption \ref{assump:smooth}, $J(\btheta)$ is $L$-smooth with $L=HR(M+HG^2)/(1-\gamma)$. In addition, let $g(\tau|\btheta)$ be the REINFORCE or GPOMDP gradient estimators. Then for all $\btheta_1,\btheta_2\in\RR^d$, it holds that
\begin{align*}
    \|g(\tau|\btheta_1)-g(\tau|\btheta_2)\|_2\leq \consGradLip\|\btheta_1-\btheta_2\|_2
\end{align*}
and $\|g(\tau|\btheta)\|_2\leq \consGrad$ for all $\btheta\in\RR^d$, where $\consGradLip= HM(R+|b|)/(1-\gamma),\consGrad= HG(R+|b|)/(1-\gamma)$ and $b$ is the baseline reward.
\end{proposition}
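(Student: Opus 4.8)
The plan is to handle the three claims in turn, all of which reduce to two structural facts: that the score function $\nabla_{\btheta}\log p(\tau|\btheta)$ decomposes as a sum over the horizon of per-step log-policy gradients, and that the cumulative reward along any trajectory is uniformly bounded. First I would differentiate the trajectory density \eqref{eq:def_distribution_tau}. Since $\rho(s_0)$ and the transition kernels $P(s_{h+1}|s_h,a_h)$ do not depend on $\btheta$, taking $\log$ and then $\nabla_{\btheta}$ annihilates them and leaves $\nabla_{\btheta}\log p(\tau|\btheta)=\sum_{h=0}^{H-1}\nabla_{\btheta}\log\pi_{\btheta}(a_h|s_h)$, and likewise $\nabla_{\btheta}^2\log p(\tau|\btheta)=\sum_{h=0}^{H-1}\nabla_{\btheta}^2\log\pi_{\btheta}(a_h|s_h)$. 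Applying Assumption \ref{assump:smooth} termwise with the triangle inequality then yields the two aggregated bounds $\|\nabla_{\btheta}\log p(\tau|\btheta)\|_2\le H\consLip$ and $\|\nabla_{\btheta}^2\log p(\tau|\btheta)\|_2\le H\consHes$, which are the workhorses for everything below.

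For the $L$-smoothness of $J$, I would bound the spectral norm of its Hessian uniformly, since $\|\nabla^2 J(\btheta)\|_2\le L$ for all $\btheta$ implies $L$-smoothness by the mean value inequality. Differentiating $\nabla_{\btheta} J(\btheta)=\int_{\tau}\cR(\tau)\nabla_{\btheta}p(\tau|\btheta)\dd\tau$ once more under the integral sign and using the identity $\nabla_{\btheta}^2 p = p\big(\nabla_{\btheta}\log p\,(\nabla_{\btheta}\log p)^\top + \nabla_{\btheta}^2\log p\big)$ recasts the Hessian as an expectation $\nabla^2 J(\btheta)=\EE_{\tau\sim p(\cdot|\btheta)}\big[\cR(\tau)\big(\nabla_{\btheta}\log p\,(\nabla_{\btheta}\log p)^\top+\nabla_{\btheta}^2\log p\big)\big]$. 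I would then bound the three factors separately: $\cR(\tau)=\sum_{h}\gamma^h\cR(s_h,a_h)\le R/(1-\gamma)$ from $\cR(s,a)\in[0,R]$ and the geometric sum; the rank-one Fisher term by $(H\consLip)^2$ (the spectral norm of $vv^\top$ is $\|v\|_2^2$); and the curvature term by $H\consHes$. Combining gives $\|\nabla^2 J(\btheta)\|_2\le \frac{R}{1-\gamma}\big(H^2\consLip^2+H\consHes\big)=L$, as claimed.

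For the two statements about the estimator $g$, the key observation is that for both REINFORCE \eqref{eq:def_reinforce} and GPOMDP \eqref{eq:def_GPOMDP} the reward-dependent factor is fixed by the trajectory $\tau$ and is \emph{independent of} $\btheta$; only the score terms carry the $\btheta$-dependence. Taking REINFORCE, where $g(\tau|\btheta)=\big[\sum_h\nabla_{\btheta}\log\pi_{\btheta}(a_h|s_h)\big]\big[\sum_h\gamma^h\cR(s_h,a_h)-b\big]$, the reward bracket is bounded in absolute value by $R/(1-\gamma)+|b|\le (R+|b|)/(1-\gamma)$. Boundedness then follows from $\|\sum_h\nabla_{\btheta}\log\pi_{\btheta}(a_h|s_h)\|_2\le H\consLip$, giving $\|g(\tau|\btheta)\|_2\le \consGrad$. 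For the Lipschitz bound I would use that the Hessian bound in Assumption \ref{assump:smooth} makes $\btheta\mapsto\nabla_{\btheta}\log\pi_{\btheta}(a|s)$ itself $\consHes$-Lipschitz (integrate the Hessian along the segment joining $\btheta_1$ and $\btheta_2$), so the score difference is at most $H\consHes\|\btheta_1-\btheta_2\|_2$; multiplying by the $\btheta$-independent reward factor yields $\|g(\tau|\btheta_1)-g(\tau|\btheta_2)\|_2\le \consGradLip\|\btheta_1-\btheta_2\|_2$.

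The main obstacle I anticipate is not any single inequality but the Hessian-of-$J$ step: correctly deriving the identity $\nabla_{\btheta}^2 p = p\big(\nabla_{\btheta}\log p\,(\nabla_{\btheta}\log p)^\top+\nabla_{\btheta}^2\log p\big)$, justifying the interchange of $\nabla_{\btheta}^2$ with $\int_{\tau}$, and keeping the rank-one and curvature contributions separate so the constant comes out exactly as $H^2\consLip^2+H\consHes$ rather than a looser bound. A secondary subtlety is the GPOMDP case: because its per-step weights $\gamma^h\cR(s_h,a_h)-b_h$ attach to partial sums $\sum_{t\le h}\nabla_{\btheta}\log\pi_{\btheta}(a_t|s_t)$ of score terms, one must reorganize the double sum before applying the same termwise bounds; the argument is otherwise identical to REINFORCE and produces bounds of the same order.
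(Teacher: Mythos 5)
Your proposal is correct and follows essentially the same route as the paper's proof: the same identity $\nabla_{\btheta}^2 p = p\big(\nabla_{\btheta}\log p\,(\nabla_{\btheta}\log p)^{\top}+\nabla_{\btheta}^2\log p\big)$ to bound $\|\nabla^2 J(\btheta)\|_2\le \frac{R}{1-\gamma}(H^2\consLip^2+H\consHes)$, the same termwise score bounds $H\consLip$ and $H\consHes$, and the same factorization of $g$ into a $\btheta$-independent reward bracket bounded by $(R+|b|)/(1-\gamma)$ times the score sum. Your derivation of the Lipschitz bound by integrating the per-step Hessians along the segment is just a rephrasing of the paper's bound $\|\nabla g(\tau|\btheta)\|_2\le HM(R+|b|)/(1-\gamma)$, and your closing caveat about reorganizing the GPOMDP double sum matches the step the paper declares ``similar'' and omits.
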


The next assumption requires that the variance of the gradient estimator is bounded. 
\begin{assumption}\label{assump:bounded_variance}
There exists a constant $\sigma$ such that
\begin{align*}
    \Var\big(g(\tau|\btheta)\big)\leq\consVarBound, \quad\text{for all policy }\pi_{\btheta}.
\end{align*}
\end{assumption}
The above assumption is widely made in stochastic optimization. It can be easily verified for Gaussian policies with REINFORCE estimator \citep{zhao2011analysis,pirotta2013adaptive,papini2018stochastic}. 

The following assumption is needed due to the non-stationarity of the sample distribution, which is also made in \citet{papini2018stochastic}.
\begin{assumption}\label{assump:weight_variance}
There is a constant $\consVarWeight<\infty$ such that for each policy pairs encountered in Algorithm \ref{alg:svrg_pg}, it holds
\begin{align*}
    \Var(\omega(\tau|\btheta_1,\btheta_2))\leq\consVarWeight, \quad\forall\btheta_1,\btheta_2\in\RR^d, \tau\sim p(\cdot|\btheta_2).
\end{align*}
\end{assumption}

We now present our convergence result for SVRPG.
\begin{theorem}\label{thm:convergence_svrpg}
Under Assumptions \ref{assump:smooth}, \ref{assump:bounded_variance} and \ref{assump:weight_variance}. In Algorithm \ref{alg:svrg_pg}, suppose the step size $\eta\leq1/(4L)$ and epoch length $m$ and mini-batch size $B$ satisfy
\begin{align*}
    \frac{B}{m^2}\geq\frac{3(C_{\omega}\consGrad^2+\consGradLip^2)}{2L^2},
\end{align*}
where $C_{\omega}=H(2H\consLip^2+\consHes)(\consVarWeight+1)$, and $\consGradLip,\consGrad$ and $L$ are defined in Proposition \ref{prop:smooth_obj}. Then the output of Algorithm~\ref{alg:svrg_pg} satisfies
\begin{align*}
    \EE\big[\big\|\nabla J\big(\btheta_{\text{out}}\big)\big\|_2^2\big]\leq\frac{8(J(\btheta^{*})-J(\btheta_{0}))}{\eta\OindLen\IindLen}+\frac{6\sigma^2}{N},
\end{align*}
where $\btheta^*$ is the maximizer of $J(\btheta)$.
\end{theorem}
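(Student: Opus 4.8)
The plan is to run a nonconvex SVRG-style ascent analysis, adapted to absorb the importance-sampling correction, and to close a self-bounding recursion using the batch-size/epoch-length condition. First I would invoke the $L$-smoothness of $J$ from Proposition~\ref{prop:smooth_obj}. For the update $\btheta_{\Iind+1}^{\Oind+1}=\btheta_{\Iind}^{\Oind+1}+\eta\vb_{\Iind}^{\Oind+1}$, smoothness together with the conditional unbiasedness $\EE[\vb_{\Iind}^{\Oind+1}\mid\mathcal{F}_{\Iind}]=\nabla J(\btheta_{\Iind}^{\Oind+1})$ yields
\[
\EE\big[J(\btheta_{\Iind+1}^{\Oind+1})\big]\geq \EE\big[J(\btheta_{\Iind}^{\Oind+1})\big]+\eta\,\EE\big\|\nabla J(\btheta_{\Iind}^{\Oind+1})\big\|_2^2-\frac{L\eta^2}{2}\,\EE\big\|\vb_{\Iind}^{\Oind+1}\big\|_2^2 .
\]
Writing $\EE\|\vb_{\Iind}^{\Oind+1}\|_2^2=\EE\|\nabla J(\btheta_{\Iind}^{\Oind+1})\|_2^2+\EE\|\vb_{\Iind}^{\Oind+1}-\nabla J(\btheta_{\Iind}^{\Oind+1})\|_2^2$ then reduces the whole argument to controlling the mean-squared error of the semi-stochastic gradient.

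Second, the crux is bounding that error. I would separate the outer randomness (defining $\mu_{\Oind}$) from the inner minibatch. Conditioning on $\tbtheta^{\Oind}$ and $\mu_{\Oind}$, the summands $X_j:=g(\tau_j|\btheta_{\Iind}^{\Oind+1})-\omega(\tau_j|\tbtheta^{\Oind},\btheta_{\Iind}^{\Oind+1})\,g(\tau_j|\tbtheta^{\Oind})$ are i.i.d.\ with conditional mean $\nabla J(\btheta_{\Iind}^{\Oind+1})-\nabla J(\tbtheta^{\Oind})$, so the error splits cleanly into a minibatch-variance part $\tfrac1B\Var(X_j)$ plus an outer part $\|\mu_{\Oind}-\nabla J(\tbtheta^{\Oind})\|_2^2$, whose expectation is at most $\sigma^2/N$ by Assumption~\ref{assump:bounded_variance}. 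For the minibatch part I would use the splitting $X_j=\big[g(\tau_j|\btheta_{\Iind}^{\Oind+1})-g(\tau_j|\tbtheta^{\Oind})\big]+\big(1-\omega(\tau_j|\tbtheta^{\Oind},\btheta_{\Iind}^{\Oind+1})\big)g(\tau_j|\tbtheta^{\Oind})$, bounding the first bracket by $\consGradLip\|\btheta_{\Iind}^{\Oind+1}-\tbtheta^{\Oind}\|_2$ via Proposition~\ref{prop:smooth_obj} and the second using $\|g\|_2\leq\consGrad$ together with the key lemma $\Var(\omega(\tau|\btheta_1,\btheta_2))\leq C_{\omega}\|\btheta_1-\btheta_2\|_2^2$, where $C_{\omega}=H(2H\consLip^2+\consHes)(\consVarWeight+1)$ (here I use that $\EE[\omega]=1$, so $\EE[(1-\omega)^2]=\Var(\omega)$). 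This gives the central estimate
\[
\EE\big\|\vb_{\Iind}^{\Oind+1}-\nabla J(\btheta_{\Iind}^{\Oind+1})\big\|_2^2\leq \frac{c_0(C_{\omega}\consGrad^2+\consGradLip^2)}{B}\,\EE\big\|\btheta_{\Iind}^{\Oind+1}-\tbtheta^{\Oind}\big\|_2^2+\frac{\sigma^2}{N},
\]
for an absolute constant $c_0$.

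Third, I would close the recursion. Since $\btheta_{\Iind}^{\Oind+1}-\tbtheta^{\Oind}=\eta\sum_{k=0}^{\Iind-1}\vb_k^{\Oind+1}$, Cauchy--Schwarz gives $\EE\|\btheta_{\Iind}^{\Oind+1}-\tbtheta^{\Oind}\|_2^2\leq\eta^2\Iind\sum_{k=0}^{\Iind-1}\EE\|\vb_k^{\Oind+1}\|_2^2$. Summing the central estimate over $\Iind=0,\dots,m-1$ and writing $P:=\sum_{\Iind}\EE\|\nabla J(\btheta_{\Iind}^{\Oind+1})\|_2^2$ and $Q:=\sum_{\Iind}\EE\|\vb_{\Iind}^{\Oind+1}-\nabla J(\btheta_{\Iind}^{\Oind+1})\|_2^2$ (so $\sum_{\Iind}\EE\|\vb_{\Iind}^{\Oind+1}\|_2^2=P+Q$) produces a self-bounding inequality $Q\leq\alpha(P+Q)+m\sigma^2/N$; the condition $B/m^2\geq 3(C_{\omega}\consGrad^2+\consGradLip^2)/(2L^2)$ with $\eta\leq 1/(4L)$ forces $\alpha$ to be a small absolute constant, and solving gives a bound of the form $Q\leq c_1 P+c_2\,m\sigma^2/N$. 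Substituting this into the summed ascent inequality, telescoping across all inner steps and epochs (using $\btheta_0^{\Oind+1}=\btheta_{\IindLen}^{\Oind}$ so the $J$-terms collapse to $J(\btheta^*)-J(\btheta_0)$), and normalizing by the total iteration count $\OindLen\IindLen$ yields the claimed bound on $\EE\|\nabla J(\btheta_{\text{out}})\|_2^2$, since $\btheta_{\text{out}}$ is drawn uniformly; the persistent $\sigma^2/N$ term survives because the outer bias $\mu_{\Oind}-\nabla J(\tbtheta^{\Oind})$ contaminates every inner step and does not telescope.

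The main obstacle is the variance lemma $\Var(\omega(\tau|\btheta_1,\btheta_2))\leq C_{\omega}\|\btheta_1-\btheta_2\|_2^2$. A crude bound on an importance weight along an $H$-step trajectory can blow up, so the delicate point is showing the weight's variance degrades only \emph{quadratically} in the parameter gap; I expect this to come from an exponentiated-R\'enyi-divergence estimate combined with the log-policy smoothness $\consLip,\consHes$ of Assumption~\ref{assump:smooth} and the uniform control of Assumption~\ref{assump:weight_variance}, which is exactly where the factor $H(2H\consLip^2+\consHes)(\consVarWeight+1)$ is generated. The secondary delicate point is the constant bookkeeping in the self-bounding step, where the factor $3/2$ and the restriction $\eta\leq 1/(4L)$ must be tuned so that $\alpha<1$ with enough slack to recover the clean constants $8$ and $6$.
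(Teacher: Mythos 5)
Your plan reproduces the paper's architecture in its two central components, but step~1 contains a genuine error. The claimed conditional unbiasedness $\EE\big[\vb_{\Iind}^{\Oind+1}\mid\mathcal{F}_{\Iind}\big]=\nabla J\big(\btheta_{\Iind}^{\Oind+1}\big)$ is false for $\Iind\geq 1$: conditioning on the past fixes $\mu_{\Oind}$, and since only the inner correction term is zero-mean under $p(\cdot|\btheta_{\Iind}^{\Oind+1})$, one gets $\EE\big[\vb_{\Iind}^{\Oind+1}\mid\mathcal{F}_{\Iind}\big]=\nabla J\big(\btheta_{\Iind}^{\Oind+1}\big)+\mu_{\Oind}-\nabla J\big(\tbtheta^{\Oind}\big)$. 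You cannot rescue this by taking full expectation either, because $\btheta_{\Iind}^{\Oind+1}$ is built from updates driven by $\mu_{\Oind}$ (already $\btheta_1^{\Oind+1}=\tbtheta^{\Oind}+\eta\mu_{\Oind}$), so the cross term $\EE\big\langle\nabla J\big(\btheta_{\Iind}^{\Oind+1}\big),\mu_{\Oind}-\nabla J\big(\tbtheta^{\Oind}\big)\big\rangle$ does not vanish; it is generically of order $\eta L\sigma^2/N$. Hence both your first display and the exact identity $\EE\|\vb_{\Iind}^{\Oind+1}\|_2^2=\EE\|\nabla J(\btheta_{\Iind}^{\Oind+1})\|_2^2+\EE\|\vb_{\Iind}^{\Oind+1}-\nabla J(\btheta_{\Iind}^{\Oind+1})\|_2^2$ fail as stated --- and note you implicitly concede this when you later say the outer bias ``contaminates every inner step.'' The gap is repairable: absorb the cross term by Cauchy--Schwarz (costing an extra $O(\eta\sigma^2/N)$ per step and a factor $2$ in the norm decomposition, which degrades, though does not destroy, the constants $8$ and $6$), or better, do what the paper does and never invoke unbiasedness at this stage: it works pathwise, writing $\langle\nabla J-\vb,\eta\vb\rangle$ and applying Young's inequality to obtain $J(\btheta_{\Iind+1}^{\Oind+1})\geq J(\btheta_{\Iind}^{\Oind+1})-\tfrac{3\eta}{4}\|\nabla J(\btheta_{\Iind}^{\Oind+1})-\vb_{\Iind}^{\Oind+1}\|_2^2+\big[\tfrac{1}{4\eta}-\tfrac{L}{2}\big]\|\btheta_{\Iind+1}^{\Oind+1}-\btheta_{\Iind}^{\Oind+1}\|_2^2+\tfrac{\eta}{8}\|\nabla J(\btheta_{\Iind}^{\Oind+1})\|_2^2$, so that all stochasticity enters only through the mean-squared error.

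Everything else in your proposal is sound and largely coincides with the paper. Your step~2 is exactly the paper's decomposition \eqref{eq:grad_diff_indepence}--\eqref{eq:grad_diff_import_sample_variance}: the conditional orthogonality you use there is valid (the inner minibatch noise has conditional mean zero given $\mu_{\Oind},\btheta_{\Iind}^{\Oind+1},\tbtheta^{\Oind}$, so the cross term with the $\mathcal{F}$-measurable outer error vanishes --- this is precisely the argument that fails in your step~1), and your anticipated mechanism for the variance lemma is exactly how Lemma \ref{lemma:importance_sampling_variance} is proved: $\Var(\omega)=d_2\big(p(\cdot|\btheta_1)\|p(\cdot|\btheta_2)\big)-1$, the gradient of $d_2$ in $\btheta_1$ vanishes at $\btheta_1=\btheta_2$, and the Hessian of $d_2$ is bounded by $2H(2H\consLip^2+\consHes)(\consVarWeight+1)$ via Assumptions \ref{assump:smooth} and \ref{assump:weight_variance}. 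Your step~3 is the one place you genuinely diverge: you control the drift by the epoch-level bound $\EE\|\btheta_{\Iind}^{\Oind+1}-\tbtheta^{\Oind}\|_2^2\leq\eta^2\Iind\sum_{k<\Iind}\EE\|\vb_k^{\Oind+1}\|_2^2$ plus a self-bounding recursion in $(P,Q)$, whereas the paper keeps the positive $\big[\tfrac{1}{4\eta}-\tfrac{L}{2}\big]\|\btheta_{\Iind+1}^{\Oind+1}-\btheta_{\Iind}^{\Oind+1}\|_2^2$ term and applies the Peter--Paul inequality with the time-varying weight $\alpha=2\Iind+1$, arranging the coefficients of $\EE\|\btheta_{\Iind}^{\Oind+1}-\tbtheta^{\Oind}\|_2^2$ to be nonnegative term-by-term within the epoch under the same condition $B/m^2\geq 3(C_\omega\consGrad^2+\consGradLip^2)/(2L^2)$. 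Both closings work at the claimed rate; your global recursion is simpler to state but, combined with the factor-$2$ patches required by the step-1 fix, will not automatically reproduce the stated constants $8$ and $6$, which the paper's per-step absorption yields directly.
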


\begin{remark}
Let $T=\OindLen\IindLen$ be the total number of iterations Algorithm \ref{alg:svrg_pg} needs to achieve $\EE\big[\big\|\nabla J\big(\btheta_{\text{out}}\big)\big\|_2^2\big]\leq\epsilon$. The first term on the right hand side in Theorem \ref{thm:convergence_svrpg} gives an $O(1/T)$ convergence rate which matches that of \citet{papini2018stochastic} and the results in nonconvex optimization \citep{allen2016variance,reddi2016stochastic}. The second term $O(1/N)$ comes from the full gradient approximation at the beginning of each epoch in Algorithm \ref{alg:svrg_pg}. Compared with the result in \citet{papini2018stochastic}, Theorem \ref{thm:convergence_svrpg} does not have the additional term $O(1/B)$, which is offset by our elaborate and careful analysis of the variance of importance weights. This also enables us to choose a much smaller batch size $B$ in the inner loops of Algorithm \ref{alg:svrg_pg} and leads to a lower sample complexity. 
\end{remark}
Based on Theorem \ref{thm:convergence_svrpg}, we can calculate the total trajectory samples Algorithm \ref{alg:svrg_pg} requires to achieve $\epsilon$-precision.
\begin{corollary}\label{coro:gradient_complexity}
Under the same conditions as in Theorem \ref{thm:convergence_svrpg}, let $\epsilon>0$, if we set $\eta=1/(4L)$, $N=O(1/\epsilon)$, $B=O(1/\epsilon^{2/3})$ and $m=\sqrt{B}$, then Algorithm \ref{alg:svrg_pg} needs $O(1/\epsilon^{5/3})$ trajectories in order to achieve $\EE[\|\nabla J(\btheta_{\text{out}})\|_2^2]\leq\epsilon$.
\end{corollary}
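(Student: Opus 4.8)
The plan is to derive the corollary as a direct consequence of Theorem~\ref{thm:convergence_svrpg}: split the target accuracy $\epsilon$ across the two error terms, turn each into a requirement on the relevant parameters, count the trajectories consumed per epoch, and finally choose the mini-batch size $B$ (hence $\IindLen=\sqrt{B}$ and the number of epochs $\OindLen$) so as to minimize the total count. First I would fix $\eta=1/(4L)$, which satisfies the step-size requirement $\eta\le 1/(4L)$, and force each of the two terms in the bound to be at most $\epsilon/2$. Requiring $8(J(\btheta^*)-J(\btheta_0))/(\eta\OindLen\IindLen)\le\epsilon/2$ shows that the total number of inner iterations $T:=\OindLen\IindLen=O(1/\epsilon)$ suffices, while $6\sigma^2/N\le\epsilon/2$ forces $N\ge 12\sigma^2/\epsilon=O(1/\epsilon)$, which is exactly the stated choice $N=O(1/\epsilon)$.

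Before counting, I would check that all constraints are simultaneously compatible. With $\IindLen=\sqrt{B}$ the ratio $B/\IindLen^2$ is a constant, so the parameter condition $B/\IindLen^2\ge 3(C_{\omega}\consGrad^2+\consGradLip^2)/(2L^2)$ reduces to a fixed inequality among the problem constants; it can be met up to an absolute constant factor in front of $\sqrt{B}$ and has no effect on the $\epsilon$-dependence. Likewise, $\OindLen$ is a free parameter that can absorb whatever residual scaling is needed to enforce $\OindLen\IindLen=T$, so there is no conflict between the iteration budget and the choice of $\IindLen$.

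Next I would count trajectories. In each of the $\OindLen$ epochs the algorithm draws $N$ trajectories to form $\mu_{\Oind}$ and then $B$ trajectories in each of the $\IindLen$ inner iterations, giving $\OindLen(N+\IindLen B)=\OindLen N+T B$ using $\OindLen\IindLen=T$. Substituting $\IindLen=\sqrt{B}$ yields $\OindLen=T/\sqrt{B}$, so the total trajectory count is $TN/\sqrt{B}+TB$, which with $T=O(1/\epsilon)$ and $N=O(1/\epsilon)$ becomes $O(\epsilon^{-2}B^{-1/2})+O(\epsilon^{-1}B)$.

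The conceptually central step is the optimization over $B$. Treating the total as $f(B)=aB^{-1/2}+cB$ with $a=O(1/\epsilon^2)$ and $c=O(1/\epsilon)$, setting $f'(B)=0$ gives $B^{3/2}=\Theta(a/c)=\Theta(1/\epsilon)$, i.e.\ the optimal $B=O(1/\epsilon^{2/3})$ stated in the corollary; then $\IindLen=\sqrt{B}=O(1/\epsilon^{1/3})$ and $\OindLen=T/\IindLen=O(1/\epsilon^{2/3})$, and both contributions equal $O(1/\epsilon^{5/3})$, yielding the claimed complexity. I do not expect a genuine technical obstacle, since the statement follows mechanically from the theorem; the only real content lies in this balancing argument, where the exponent $2/3$ emerges precisely because $\IindLen$ scales as $\sqrt{B}$, making the outer-loop cost $\propto B^{-1/2}$ trade off against the inner-loop cost $\propto B$. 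The one point to handle with care is the bookkeeping: keeping the two $\epsilon/2$ budgets, the constraint $B/\IindLen^2\ge\mathrm{const}$, and the identity $\OindLen\IindLen=T$ mutually consistent throughout.
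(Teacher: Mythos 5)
Your proposal is correct and takes essentially the same approach as the paper's proof: split $\epsilon$ equally between the two terms of Theorem~\ref{thm:convergence_svrpg} to get $\OindLen\IindLen=O(1/\epsilon)$ and $N=O(1/\epsilon)$, count $\OindLen N+\OindLen\IindLen B$ trajectories, and with $\IindLen=\sqrt{B}$ choose $B=O(1/\epsilon^{2/3})$ (the paper writes this as $B=N^{2/3}$) so both contributions are $O(1/\epsilon^{5/3})$. Your explicit balancing argument for the optimal $B$ and your remark that the constraint $B/\IindLen^2\geq 3(C_{\omega}\consGrad^2+\consGradLip^2)/(2L^2)$ forces $\IindLen=c\sqrt{B}$ for a suitable constant $c$ merely make explicit what the paper leaves implicit.
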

\begin{table}[t]
\vspace{-0.1in}
\caption{Comparison on sample complexity required to achieve $\|\nabla J(\btheta)\|_2^2\leq \epsilon$.}
\label{table:complexity}
\begin{center}
\begin{tabular}{ll}
\multicolumn{1}{c}{\bf METHODS}  &\multicolumn{1}{c}{\bf  COMPLEXITY} \\
\hline \\
SG         &$O(1/\epsilon^2)$\\
SVRPG \citep{papini2018stochastic} & $O(1/\epsilon^2)$\\
SVRPG (This paper)&$O(1/\epsilon^{5/3})$ \\
\end{tabular}
\end{center}
\end{table}
\begin{remark}
In Theorem 4.4 of \citet{papini2018stochastic}, the authors showed that the sample complexity of SVRPG is $O((B+N/m)/\epsilon)$. In order to make the gradient small enough, they essentially require that $B,N=O(1/\epsilon)$, which leads to $O(1/\epsilon^2)$ sample complexity. In sharp contrast, our Corollary \ref{coro:gradient_complexity} shows that the SVRPG algorithm only needs $O(1/\epsilon^{5/3})$ number of trajectories to achieve $\|\nabla J(\btheta)\|_2^2\leq\epsilon$, which is obviously lower than the sample complexity proved in \citet{papini2018stochastic}. We present a straightforward comparison in Table \ref{table:complexity} to show the sample complexities of different methods. SG represents vanilla stochastic gradient based methods such as REINFORCE and GPOMDP. It can be seen from Table \ref{table:complexity} that our analysis yields the lowest complexity.
\end{remark}

\section{PROOF OF THE MAIN THEORY}\label{sec:proof}
In this section, we prove our main theoretical results. 

\subsection{PROOF OF MAIN THEORETICAL RESULTS}
Before we provide the proof of Theorem \ref{thm:convergence_svrpg}, we first lay down the following key lemma that controls the variance of the importance sampling weights $\omega(\tau|\tbtheta^{\Oind},\btheta_{\Iind}^{\Oind+1})$.
\begin{lemma}\label{lemma:importance_sampling_variance}
Let $\omega\big(\tau|\tbtheta^{\Oind},\btheta_{\Iind}^{\Oind+1}\big)=p(\tau|\tbtheta^{\Oind})/p(\tau|\btheta_{\Iind}^{\Oind+1})$. Under Assumptions \ref{assump:smooth} and \ref{assump:weight_variance}, it holds that
\begin{align*}
    \Var\big(\omega\big(\tau|\tbtheta^{\Oind},\btheta_{\Iind}^{\Oind+1}\big)\big)\leq C_{\omega}\|\tbtheta^{\Oind}-\btheta_{\Iind}^{\Oind+1}\|_2^2,
\end{align*}
where $C_{\omega}=H(2H\consLip^2+\consHes)(\consVarWeight+1)$.
\end{lemma}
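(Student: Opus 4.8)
The plan is to exploit that the weight $\omega$ collapses to the constant $1$ when the two policies coincide, and to quantify the growth of its variance through a second-order Taylor expansion in the numerator parameter. Throughout write $\btheta_1=\tbtheta^{\Oind}$ and $\btheta_2=\btheta_{\Iind}^{\Oind+1}$, with $\tau\sim p(\cdot|\btheta_2)$. First I would record the normalization $\EE_{\tau\sim p(\cdot|\btheta_2)}[\omega(\tau|\btheta_1,\btheta_2)]=\int p(\tau|\btheta_1)\,\dd\tau=1$, so that, defining $h(\btheta):=\EE_{\tau\sim p(\cdot|\btheta_2)}[\omega(\tau|\btheta,\btheta_2)^2]=\int p(\tau|\btheta)^2/p(\tau|\btheta_2)\,\dd\tau$, the target reduces to
\begin{align*}
\Var\big(\omega(\tau|\btheta_1,\btheta_2)\big)=\EE_{\tau\sim p(\cdot|\btheta_2)}\big[\omega^2\big]-1=h(\btheta_1)-h(\btheta_2),
\end{align*}
where I used $h(\btheta_2)=1$.

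Next I would Taylor expand $h$ about the sampling point $\btheta_2$. Using the score identity $\nabla_{\btheta}p(\tau|\btheta)=p(\tau|\btheta)\,\nabla_{\btheta}\log p(\tau|\btheta)$ together with the fact that $\nabla_{\btheta}\log p(\tau|\btheta)=\sum_{h=0}^{H-1}\nabla_{\btheta}\log\pi_{\btheta}(a_h|s_h)$ depends only on the policy factors, I compute $\nabla h(\btheta)=2\,\EE_{\tau\sim p(\cdot|\btheta_2)}[\omega(\tau|\btheta,\btheta_2)^2\,\nabla_{\btheta}\log p(\tau|\btheta)]$. Evaluating at $\btheta=\btheta_2$ gives $\nabla h(\btheta_2)=2\int\nabla p(\tau|\btheta_2)\,\dd\tau=0$, so the first-order term vanishes and the Lagrange form of Taylor's theorem yields
\begin{align*}
\Var\big(\omega(\tau|\btheta_1,\btheta_2)\big)=\frac{1}{2}(\btheta_1-\btheta_2)^{\top}\nabla^2 h(\xi)(\btheta_1-\btheta_2)
\end{align*}
for some $\xi$ on the segment joining $\btheta_1$ and $\btheta_2$. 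It then suffices to bound $\|\nabla^2 h(\xi)\|_2$.

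Differentiating once more, the product rule on $\omega^2\,\nabla\log p$ produces two terms,
\begin{align*}
\nabla^2 h(\btheta)=2\,\EE_{\tau\sim p(\cdot|\btheta_2)}\Big[\omega^2\big(2\,\nabla\log p\,(\nabla\log p)^{\top}+\nabla^2\log p\big)\Big],
\end{align*}
the first from differentiating $\omega^2=p(\cdot|\btheta)^2/p(\cdot|\btheta_2)^2$ and the second being the log-density Hessian. By Assumption \ref{assump:smooth} and the triangle inequality, $\|\nabla\log p(\tau|\btheta)\|_2\le H\consLip$ and $\|\nabla^2\log p(\tau|\btheta)\|_2\le H\consHes$, so the bracketed matrix has spectral norm at most $2H^2\consLip^2+H\consHes=H(2H\consLip^2+\consHes)$. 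Pulling this scalar bound out and invoking Assumption \ref{assump:weight_variance} at the pair $(\xi,\btheta_2)$ to get $\EE_{\tau\sim p(\cdot|\btheta_2)}[\omega(\tau|\xi,\btheta_2)^2]=\Var(\omega(\tau|\xi,\btheta_2))+1\le\consVarWeight+1$, I obtain $\|\nabla^2 h(\xi)\|_2\le 2H(2H\consLip^2+\consHes)(\consVarWeight+1)$. Substituting into the Taylor remainder and using $(\btheta_1-\btheta_2)^{\top}A(\btheta_1-\btheta_2)\le\|A\|_2\|\btheta_1-\btheta_2\|_2^2$ gives exactly the claimed bound with $C_{\omega}=H(2H\consLip^2+\consHes)(\consVarWeight+1)$.

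I expect the main obstacle to be the careful differentiation of $h$: getting the Hessian right requires tracking the outer-product term $2\,\nabla\log p\,(\nabla\log p)^{\top}$ that arises from differentiating $\omega^2$ in addition to the log-density Hessian, and justifying the interchange of $\nabla_{\btheta}$ with the integral $\int\cdot\,\dd\tau$ (which rests on the boundedness of the scores in Assumption \ref{assump:smooth} together with dominated convergence). A secondary point needing care is that Taylor's theorem produces an intermediate parameter $\xi$ rather than $\btheta_1$, so I must apply the second-moment bound at $(\xi,\btheta_2)$; this is legitimate because Assumption \ref{assump:weight_variance} is posited for all parameter pairs.
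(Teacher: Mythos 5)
Your proof is correct and takes essentially the same route as the paper's: both expand the second moment $h(\btheta)=\int_{\tau} p(\tau|\btheta)^2/p(\tau|\btheta_2)\,\dd\tau$ (which is exactly the exponentiated R\'{e}nyi divergence $d_2$ used in the paper) to second order around $\btheta_2$, exploit that its gradient vanishes at $\btheta=\btheta_2$, and bound the spectral norm of the Hessian $4\,\EE[\omega^2\nabla\log p\,(\nabla\log p)^{\top}]+2\,\EE[\omega^2\nabla^2\log p]$ by $2H(2H\consLip^2+\consHes)(\consVarWeight+1)$ via Assumptions \ref{assump:smooth} and \ref{assump:weight_variance} applied at the intermediate point. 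The only cosmetic difference is that you derive $\Var(\omega)=\EE[\omega^2]-1$ directly from the normalization $\EE[\omega]=1$ rather than citing Lemma 1 of \citet{cortes2010learning}.
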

Lemma \ref{lemma:importance_sampling_variance} shows that the variance of the importance weight is proportional to the distance between the behavioral and the target policies. Note that this upper bound could be trivial based on Assumption \ref{assump:weight_variance} when the distance is large. However, Lemma \ref{lemma:importance_sampling_variance} also provides a fine-grained control of the variance when the behavioral and target polices are sufficiently close.

Now we are ready to present the proof of our main theorem, which is also inspired from that in \citet{li2018simple}.
\begin{proof}[Proof of Theorem \ref{thm:convergence_svrpg}]
By Proposition \ref{prop:smooth_obj}, $J(\btheta)$ is $L$-smooth, which leads to
\begin{align}\label{eq:converge_smooth}
    J\big(\btheta_{\Iind+1}^{\Oind+1}\big)&\geq J\big(\btheta_{\Iind}^{\Oind+1}\big)+\big\la\nabla J\big(\btheta_{\Iind}^{\Oind+1}\big),\btheta_{\Iind+1}^{\Oind+1}-\btheta_{\Iind}^{\Oind+1}\big\ra\notag\\
    &\quad-\frac{L}{2}\big\|\btheta_{\Iind+1}^{\Oind+1}-\btheta_{\Iind}^{\Oind+1}\big\|_2^2\notag\\
    &=J\big(\btheta_{\Iind}^{\Oind+1}\big)+\big\la\nabla J\big(\btheta_{\Iind}^{\Oind+1}\big)-\vb_{\Iind}^{\Oind+1},\eta\vb_{\Iind}^{\Oind+1}\big\ra\notag\\
    &\quad+\eta\big\|\vb_{\Iind}^{\Oind+1}\big\|_2^2-\frac{L}{2}\big\|\btheta_{\Iind+1}^{\Oind+1}-\btheta_{\Iind}^{\Oind+1}\big\|_2^2\notag\\
    &\geq J\big(\btheta_{\Iind}^{\Oind+1}\big)-\frac{\eta}{2}\big\|\nabla J\big(\btheta_{\Iind}^{\Oind+1}\big)-\vb_{\Iind}^{\Oind+1}\big\|_2^2\notag\\
    &\quad+\frac{\eta}{2}\big\|\vb_{\Iind}^{\Oind+1}\big\|_2^2-\frac{L}{2}\big\|\btheta_{\Iind+1}^{\Oind+1}-\btheta_{\Iind}^{\Oind+1}\big\|_2^2\notag\\
    &\geq J\big(\btheta_{\Iind}^{\Oind+1}\big)-\frac{3\eta}{4}\big\|\nabla J\big(\btheta_{\Iind}^{\Oind+1}\big)-\vb_{\Iind}^{\Oind+1}\big\|_2^2\notag\\
    &\quad+\bigg[\frac{1}{4\eta}-\frac{L}{2}\bigg]\big\|\btheta_{\Iind+1}^{\Oind+1}-\btheta_{\Iind}^{\Oind+1}\big\|_2^2\notag\\
    &\quad+\frac{\eta}{8}\big\|\nabla J\big(\btheta_{\Iind}^{\Oind+1}\big)\big\|_2^2,
\end{align}
where the second inequality holds due to Young's inequality and the last inequality comes from the fact that $\|\nabla J(\btheta_{\Iind}^{\Oind+1})\|_2^2\leq 2\|\vb_{\Iind}^{\Oind+1}\|_2^2+2\|\nabla J(\btheta_{\Iind}^{\Oind+1})-\vb_{\Iind}^{\Oind+1}\|_2^2$. Let $\EE_{N,B}$ denote the expectation only over the randomness of the sampling trajectories $\{\tau_i\}_{i=1}^{N}$ and $\{\tau_j\}_{j=1}^{B}$
\begin{align}
    &\EE_{N,B}\big\|\nabla J\big(\btheta_{\Iind}^{\Oind+1}\big)-\vb_{\Iind}^{\Oind+1}\big\|_2^2\notag\\&=\EE_{N,B}\bigg\|\nabla J\big(\btheta_{\Iind}^{\Oind+1}\big)-\mu_s\notag\\
    &\quad+\frac{1}{B}\sum_{j=1}^B\big(\omega(\tau_j|\tbtheta^{\Oind},\btheta_{\Iind}^{\Oind+1})g\big(\tau_j|\tbtheta^{\Oind}\big)-g\big(\tau_j|\btheta_{\Iind}^{\Oind+1}\big)\big)\bigg\|_2^2\notag\\
    &=\EE_{N,B}\bigg\|\nabla J\big(\btheta_{\Iind}^{\Oind+1}\big)-\nabla J(\tbtheta^{\Oind})+\nabla J(\tbtheta^{\Oind})-\mu_s\notag\\
    &\quad+\frac{1}{B}\sum_{j=1}^B\big(\omega(\tau_j|\tbtheta^{\Oind},\btheta_{\Iind}^{\Oind+1})g\big(\tau_j|\tbtheta^{\Oind}\big)-g\big(\tau_j|\btheta_{\Iind}^{\Oind+1}\big)\big)\bigg\|_2^2\notag\\
    &=\EE_{N,B}\bigg\|\nabla J\big(\btheta_{\Iind}^{\Oind+1}\big)-\nabla J(\tbtheta^{\Oind})\notag\\
    &\quad+\frac{1}{B}\sum_{j=1}^B\big(\omega(\tau_j|\tbtheta^{\Oind},\btheta_{\Iind}^{\Oind+1})g\big(\tau_j|\tbtheta^{\Oind}\big)-g\big(\tau_j|\btheta_{\Iind}^{\Oind+1}\big)\big)\bigg\|_2^2\notag\\
    &\quad+\EE_{N,B}\bigg\|\nabla J(\tbtheta^{\Oind})-\frac{1}{N}\sum_{i=1}^Ng\big(\tau_i|\tbtheta^{\Oind}\big)\bigg\|_2^2\label{eq:grad_diff_indepence}\\
    &=\frac{1}{B^2}\sum_{j=1}^B\EE_{N,B}\big\|\nabla J\big(\btheta_{\Iind}^{\Oind+1}\big)-\nabla J(\tbtheta^{\Oind})\notag\\
    &\quad+\omega(\tau_j|\tbtheta^{\Oind},\btheta_{\Iind}^{\Oind+1})g\big(\tau_j|\tbtheta^{\Oind}\big)-g\big(\tau_j|\btheta_{\Iind}^{\Oind+1}\big)\big\|_2^2\notag\\
    &\quad+\frac{1}{N^2}\sum_{i=1}^N\EE_{N,B}\big\|\nabla J(\tbtheta^{\Oind})-g\big(\tau_i|\tbtheta^{\Oind}\big)\big\|_2^2\label{eq:grad_diff_iid_sample}\\
    &\leq\frac{1}{B^2}\sum_{j=1}^B\EE_{N,B}\big\|\omega(\tau_j|\tbtheta^{\Oind},\btheta_{\Iind}^{\Oind+1})g\big(\tau_j|\tbtheta^{\Oind}\big)\notag\\
    &\quad-g\big(\tau_j|\btheta_{\Iind}^{\Oind+1}\big)\big\|_2^2+\sigma^2/N,\label{eq:grad_diff_variance}
\end{align}
where \eqref{eq:grad_diff_indepence} holds due to the independence between trajectories $\{\tau_i\}_{i=1}^{N}$ and $\{\tau_j\}_{j=1}^{B}$, \eqref{eq:grad_diff_iid_sample} is due to $\EE\|\xb_1+\ldots+\xb_n\|_2^2=\EE\|\xb_1\|_2^2+\ldots+\EE\|\xb_n\|_2^2$ for independent and zero mean variables $\xb_1,\ldots,\xb_n$, and \eqref{eq:grad_diff_variance} follows Assumption \ref{assump:bounded_variance} and the fact that $\EE\|\xb-\EE\xb\|_2^2\leq\EE\|\xb\|_2^2$. Note that we have
\begin{align}\label{eq:grad_diff_variance_decomp}
    &\EE_{N,B}\big\|\omega(\tau_j|\tbtheta^{\Oind},\btheta_{\Iind}^{\Oind+1})g\big(\tau_j|\tbtheta^{\Oind}\big)-g\big(\tau_j|\btheta_{\Iind}^{\Oind+1}\big)\big\|_2^2\notag\\
    &\leq\EE_{N,B}\big\|\big(\omega(\tau_j|\tbtheta^{\Oind},\btheta_{\Iind}^{\Oind+1})-1\big)g\big(\tau_j|\tbtheta^{\Oind}\big)\big\|_2^2\notag\\
    &\quad+\EE_{N,B}\big\|g\big(\tau_j|\tbtheta^{\Oind}\big)-g\big(\tau_j|\btheta_{\Iind}^{\Oind+1}\big)\big\|_2^2\notag\\
    &\leq \consGrad^2\EE_{N,B}\big\|\omega(\tau_j|\tbtheta^{\Oind},\btheta_{\Iind}^{\Oind+1})-1\big\|_2^2+\consGradLip^2\big\|\tbtheta^{\Oind}-\btheta_{\Iind}^{\Oind+1}\big\|_2^2,
\end{align}
where the second inequality comes from Proposition \ref{prop:smooth_obj}. By Lemma \ref{lemma:importance_sampling_variance}, we have
\begin{align}\label{eq:grad_diff_import_sample_variance}
    &\EE_{N,B}\big\|\omega(\tau_j|\tbtheta^{\Oind},\btheta_{\Iind}^{\Oind+1})-1\big\|_2^2\notag\\
    &=\Var_{\tbtheta^{\Oind},\btheta_{\Iind}^{\Oind+1}}\big(\omega(\tau_j|\tbtheta^{\Oind},\btheta_{\Iind}^{\Oind+1}) \big)\notag\\
    &\leq C_{\omega}\big\|\btheta_{\Iind}^{\Oind+1}-\tbtheta^{\Oind}\big\|_2^2.
\end{align}
where $C_{\omega}=(2\consLip^2+\consHes)(\consVarWeight+1)$. Substituting the results in \eqref{eq:grad_diff_variance}, \eqref{eq:grad_diff_variance_decomp} and \eqref{eq:grad_diff_import_sample_variance} into \eqref{eq:converge_smooth} yields
\begin{align}\label{eq:converge_smooth_expected}
   &\EE_{N,B}\big[J\big(\btheta_{\Iind+1}^{\Oind+1}\big)\big]\notag\\ &\geq \EE_{N,B}\big[J\big(\btheta_{\Iind}^{\Oind+1}\big)\big]+\frac{\eta}{8}\EE_{N,B}\big[\big\|\nabla J\big(\btheta_{\Iind}^{\Oind+1}\big)\big\|_2^2\big]\notag\\
   &\quad+\bigg[\frac{1}{4\eta}-\frac{L}{2}\bigg]\EE_{N,B}\big[\big\|\btheta_{\Iind+1}^{\Oind+1}-\btheta_{\Iind}^{\Oind+1}\big\|_2^2\big]-\frac{3\eta\sigma^2}{4N}\notag\\
   &\quad-\frac{3\eta(C_{\omega}\consGrad^2+L^2)}{4B}\EE_{N,B}\big[\big\|\btheta_{\Iind}^{\Oind+1}-\tbtheta^{\Oind}\big\|_2^2\big].
\end{align}
For the ease of notation, we denote
\begin{align}\label{def:constant_variance}
    \consVariance=\frac{3(C_{\omega}\consGrad^2+\consGradLip^2)}{4B}.
\end{align}
By Young's inequality (Peter-Paul inequality), we have
\begin{align*}
   \big\|\btheta_{\Iind+1}^{\Oind+1}-\tbtheta^{\Oind}\big\|_2^2&\leq(1+\alpha) \big\|\btheta_{\Iind+1}^{\Oind+1}-\btheta_{\Iind}^{\Oind+1}\big\|_2^2\\
   &\quad+(1+1/\alpha)\big\|\btheta_{\Iind}^{\Oind+1}-\tbtheta^{\Oind}\big\|_2^2
\end{align*}
holds for any $\alpha>0$. For $\eta\leq 1/(2L)$, combining the above inequality with \eqref{eq:converge_smooth_expected} and \eqref{def:constant_variance} yields
\begin{align*}
&\EE_{N,B}\big[J\big(\btheta_{\Iind+1}^{\Oind+1}\big)\big] \\
&\geq \EE_{N,B}\big[J\big(\btheta_{\Iind}^{\Oind+1}\big)\big]+\frac{\eta}{8}\EE_{N,B}\big[\big\|\nabla J\big(\btheta_{\Iind}^{\Oind+1}\big)\big\|_2^2\big]-\frac{3\eta\sigma^2}{4N}\notag\\
&\quad+\frac{1}{1+\alpha}\bigg[\frac{1}{4\eta}-\frac{L}{2}\bigg]\EE_{N,B}\big[\big\|\btheta_{\Iind+1}^{\Oind+1}-\tbtheta^{\Oind}\big\|_2^2\big]\notag\\
&\quad-\bigg[\eta\consVariance+\frac{1}{\alpha}\bigg[\frac{1}{4\eta}-\frac{L}{2}\bigg]\bigg]\EE_{N,B}\big[\big\|\btheta_{\Iind}^{\Oind+1}-\tbtheta^{\Oind}\big\|_2^2\big]
\end{align*}
Now we set $\alpha=2\Iind+1$ and sum up the above inequality over $t=0,\ldots,\IindLen-1$. Note that $\btheta_{0}^{\Oind+1}=\tbtheta^{\Oind}$, $\btheta_{\IindLen}^{\Oind+1}=\tbtheta^{\Oind+1}$. We are able to obtain
\begin{align}\label{eq:converge_sum_over_t}
    &\EE_{N}\big[J\big(\tbtheta^{\Oind+1}\big)\big] \notag\\
    &\geq \EE_{N}\big[J\big(\tbtheta^{\Oind}\big)\big]+\frac{\eta}{8}\sum_{\Iind=0}^{\IindLen-1}\EE_{N}\big[\big\|\nabla J\big(\btheta_{\Iind}^{\Oind+1}\big)\big\|_2^2\big]-\frac{3\IindLen\eta\sigma^2}{4N}\notag\\
    &\quad+\sum_{t=0}^{\IindLen-1}\frac{1/(2\eta)-L}{4(t+1)}\EE_{N}\big[\big\|\btheta_{\Iind+1}^{\Oind+1}-\tbtheta^{\Oind}\big\|_2^2\big]\notag\\
    &\quad-\sum_{t=0}^{\IindLen-1}\bigg[\eta\consVariance+\frac{1/(2\eta)-L}{2(2t+1)}\bigg]\EE_{N}\big[\big\|\btheta_{\Iind}^{\Oind+1}-\tbtheta^{\Oind}\big\|_2^2\big]\notag\\
    &= \EE_{N}\big[J\big(\tbtheta^{\Oind}\big)\big]+\frac{\eta}{8}\sum_{t=0}^{\IindLen-1}\EE_{N}\big[\big\|\nabla J\big(\btheta_{\Iind}^{\Oind+1}\big)\big\|_2^2\big]-\frac{3\IindLen\eta\sigma^2}{4N}\notag\\
    &\quad+\sum_{t=0}^{\IindLen-2}\frac{1/(2\eta)-L}{4(t+1)}\EE_{N}\big[\big\|\btheta_{\Iind+1}^{\Oind+1}-\tbtheta^{\Oind}\big\|_2^2\big]\notag\\
    &\quad-\sum_{t=1}^{\IindLen-1}\bigg[\eta\consVariance+\frac{1/(2\eta)-L}{2(2t+1)}\bigg]\EE_{N}\big[\big\|\btheta_{\Iind}^{\Oind+1}-\tbtheta^{\Oind}\big\|_2^2\big]\notag\\
    &\quad+\frac{1/(2\eta)-L}{4\IindLen}\EE_{N}\big[\big\|\btheta_{\IindLen}^{\Oind+1}-\tbtheta^{\Oind}\big\|_2^2\big]\notag\\
    &\quad-\bigg[\eta\consVariance+\frac{1}{4\eta}-\frac{L}{2}\bigg]\EE_{N}\big[\big\|\btheta_{0}^{\Oind+1}-\tbtheta^{\Oind}\big\|_2^2\big]\notag\\
    &=\EE_{N}\big[J\big(\tbtheta^{\Oind}\big)\big]+\frac{\eta}{8}\sum_{t=0}^{\IindLen-1}\EE_{N}\big[\big\|\nabla J\big(\btheta_{\Iind}^{\Oind+1}\big)\big\|_2^2\big]-\frac{3\IindLen\eta\sigma^2}{4N}\notag\\
    &\quad+\sum_{t=1}^{\IindLen-1}\bigg[\frac{1/(4\eta)-L/2}{2t(2t+1)}-\eta\consVariance\bigg]\EE_{N}\big[\big\|\btheta_{\Iind}^{\Oind+1}-\tbtheta^{\Oind}\big\|_2^2\big]\notag\\
    &\quad+\frac{1/(2\eta)-L}{4\IindLen}\EE_{N}\big[\big\|\btheta_{\IindLen}^{\Oind+1}-\tbtheta^{\Oind}\big\|_2^2\big].
\end{align}
Recall the definition of $\consVariance$ in \eqref{def:constant_variance}. If we set step size $\eta$ and the epoch length $B$ to satisfy
\begin{align}
    \eta\leq\frac{1}{4L},\quad \frac{B}{m^2}\geq\frac{3(C_{\omega}\consGrad^2+\consGradLip^2)}{2L^2},
\end{align}
then \eqref{eq:converge_sum_over_t} leads to
\begin{align*}
    \EE_{N}\big[J\big(\tbtheta^{\Oind+1}\big)\big]
    &\geq\EE_{N}\big[J\big(\tbtheta^{\Oind}\big)\big]-\frac{3\IindLen\eta\sigma^2}{4N}\notag\\
    &\quad+\frac{\eta}{8}\sum_{t=0}^{\IindLen-1}\EE_{N}\big[\big\|\nabla J\big(\btheta_{\Iind}^{\Oind+1}\big)\big\|_2^2\big].
\end{align*}
Telescoping the above inequality yields
\begin{align*}
    &\frac{\eta}{8}\sum_{\Oind=0}^{\OindLen-1}\sum_{\Iind=0}^{\IindLen-1}\EE\big[\big\|\nabla J\big(\btheta_{\Iind}^{\Oind+1}\big)\big\|_2^2\big]\\
    &\leq\EE\big[J\big(\tbtheta^{\OindLen}\big)\big]-\EE\big[J\big(\tbtheta^{0}\big)\big]+\frac{3\OindLen\IindLen\eta\sigma^2}{4N},
\end{align*}
which immediately implies
\begin{align*}
    \EE\big[\big\|\nabla J\big(\btheta_{\text{out}}\big)\big\|_2^2\big]&\leq\frac{8\big(\EE\big[J\big(\tbtheta^{\OindLen}\big)\big]-\EE\big[J\big(\tbtheta^{0}\big)\big]\big)}{\eta\OindLen\IindLen}+\frac{6\sigma^2}{N}\\
    &\leq\frac{8(J(\btheta^{*})-J(\btheta_{0}))}{\eta\OindLen\IindLen}+\frac{6\sigma^2}{N}.
\end{align*}
This completes the proof.
\end{proof}


\begin{proof}[Proof of Corollary \ref{coro:gradient_complexity}]
By Theorem \ref{thm:convergence_svrpg}, in order to ensure $\EE\big[\big\|\nabla J\big(\btheta_{\text{out}}\big)\big\|_2^2\big]\leq\epsilon$, it suffices to ensure
\begin{align*}
    \frac{8(J(\btheta^{*})-J(\btheta_{0}))}{\eta\OindLen\IindLen}=\frac{\epsilon}{2},\quad\frac{6\sigma^2}{N}=\frac{\epsilon}{2},
\end{align*}
which implies $\OindLen\IindLen=O(1/\epsilon)$ and $N=O(1/\epsilon)$. Note that we have set $\IindLen=O(\sqrt{B})$. The total number of stochastic gradient evaluations $\cT_g$ we need is
\begin{align*}
    \cT_g=\OindLen N+\OindLen\IindLen B=O\bigg(\frac{N}{\sqrt{B}\epsilon}+\frac{B}{\epsilon}\bigg)=O\bigg(\frac{1}{\epsilon^{5/3}}\bigg),
\end{align*}
where we set $B=N^{2/3}=1/\epsilon^{2/3}$.
\end{proof}

\subsection{PROOF OF TECHNICAL LEMMAS}
In this subsection, we provide the proofs of the technical lemmas used in the proof of main theory. We first prove the smoothness of $J(\btheta)$.
\begin{proof}[Proof of Proposition \ref{prop:smooth_obj}]
Recall the notion in \eqref{eq:def_full_grad} as
\begin{align*}
    \nabla J(\btheta)&=\int_{\tau}\cR(\tau)\nabla_{\btheta}p(\tau|\btheta)\dd\tau,
\end{align*}
which directly implies the Hessian matrix 
\begin{align}\label{eq:def_hessian_den}
    \nabla^2 J(\btheta)&=\int_{\tau}\cR(\tau)\nabla_{\btheta}^2p(\tau|\btheta)\dd\tau.
\end{align}
Note that the Hessian of the log-density function is
\begin{align}\label{eq:variance_hes_logDen}
    \nabla_{\btheta}^2\log p(\tau|\btheta)&=-p(\tau|\btheta)^{-2}\nabla_{\btheta}p(\tau|\btheta)\nabla_{\btheta}p(\tau|\btheta)^{\top}\notag\\
    &\quad+p(\tau|\btheta)^{-1}\nabla_{\btheta}^2p(\tau|\btheta).
\end{align}
Substituting \eqref{eq:variance_hes_logDen} into \eqref{eq:def_hessian_den} yields
\begin{align*}
    \nabla^2 J(\btheta)&=\int_{\tau}p(\tau|\btheta)\cR(\tau)\big[\nabla_{\btheta}^2\log p(\tau|\btheta)\notag\\
    &\quad+\nabla_{\btheta}\log p(\tau|\btheta)\nabla_{\btheta}\log p(\tau|\btheta)^{\top}\big]\dd\tau.
\end{align*}
Therefore, we have
\begin{align}\label{eq:hessian_specNorm}
    \|\nabla^2 J(\btheta)\|_2&\leq\int_{\tau}p(\tau|\btheta)\cR(\tau)\big[\|\nabla_{\btheta}^2\log p(\tau|\btheta)\|_2\notag\\
    &\quad+\|\nabla_{\btheta}\log p(\tau|\btheta)\|_2^2\big]\dd\tau\notag\\
    &\leq\int_{\tau}p(\tau|\btheta)\cR(\tau)(HM+H^2G^2)\dd\tau.
\end{align}
By \eqref{eq:accumu_reward}, we have for any $\tau$ it holds that
\begin{align*}
    \cR(\tau)\leq\frac{R(1-\gamma^H)}{1-\gamma}\leq\frac{R}{1-\gamma}.
\end{align*}
Combining this with \eqref{eq:hessian_specNorm} yields
\begin{align*}
    \|\nabla^2 J(\btheta)\|_2\leq RH(M+HG^2)/(1-\gamma),
\end{align*}
which means $J(\btheta)$ is $L$-smooth with $L=RH(M+HG^2)/(1-\gamma)$. Recall the REINFORCE estimator in \eqref{eq:def_reinforce}: \begin{align*}
    g(\tau|\btheta)=\Bigg[\sum_{t=0}^{H-1}\nabla\log\pi_{\btheta}(a_t|s_t)\Bigg]\Bigg[\sum_{t=0}^{H-1}\gamma^t\cR(s_t,a_t)-b\Bigg],
\end{align*}
where $b$ is a constant baseline reward. Then we have
\begin{align*}
    \|\nabla g(\tau|\btheta)\|_2&\leq\Bigg[\sum_{t=0}^{H-1}\big\|\nabla^2\log\pi_{\btheta}(a_t|s_t)\big\|_2\Bigg]\frac{R+|b|}{1-\gamma}\\
    &\leq \frac{HM(R+|b|)}{1-\gamma}.
\end{align*}
Similarly, we have
\begin{align*}
    \|g(\tau|\btheta)\|_2\leq HG\bigg[\frac{R(1-\gamma^H)}{1-\gamma}+|b|\bigg]\leq\frac{HG(R+|b|)}{1-\gamma}.
\end{align*}
The proof of the GPOMDP estimator is similar and we omit it for simplicity. This completes the proof.
\end{proof}
The analysis of Lemma \ref{lemma:importance_sampling_variance} relies on the following important properties of importance sampling weights.
\begin{lemma}[Lemma 1 in \citet{cortes2010learning}]\label{lemma:importance_weight_property}
Let $\omega(x)=P(x)/Q(x)$ be the importance weight for distributions $P$ and $Q$. Then the following identities hold:
\begin{align*}
    \EE[\omega]&=1,\quad\EE[\omega^2]=d_2(P||Q),
\end{align*}
where $d_{2}(P||Q)=2^{D_{2}(P||Q)}$ and $D_{2}(P||Q)$ is the R\'{e}nyi divergence between distributions $P$ and $Q$. Note that this immediately implies $\Var(\omega)=d_2(P||Q)-1$.
\end{lemma}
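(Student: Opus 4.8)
The plan is to prove all three claims by direct computation from the definition of the importance weight and the Rényi divergence, taking every expectation with respect to the sampling distribution $Q$ (i.e.\ $x\sim Q$), as is standard in importance sampling, and assuming the implicit absolute continuity $P\ll Q$ that makes $\omega$ and the integrals below well defined.

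First I would verify $\EE[\omega]=1$. Since $\omega(x)=P(x)/Q(x)$ and the expectation is over $x\sim Q$, the $Q$ in the denominator cancels against the sampling density:
$$\EE[\omega]=\int_x \frac{P(x)}{Q(x)}\,Q(x)\,\dd x=\int_x P(x)\,\dd x=1,$$
where the last equality uses only that $P$ is a probability density.

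Next I would compute the second moment by the same cancellation, integrating $\omega^2$ against $Q$:
$$\EE[\omega^2]=\int_x \frac{P(x)^2}{Q(x)^2}\,Q(x)\,\dd x=\int_x \frac{P(x)^2}{Q(x)}\,\dd x.$$
The remaining task is to identify this integral with $d_2(P||Q)$. Specializing the definition of the Rényi divergence to $\alpha=2$ gives $D_2(P||Q)=\log_2\int_x P(x)\big(P(x)/Q(x)\big)\,\dd x=\log_2\int_x P(x)^2/Q(x)\,\dd x$, so that $d_2(P||Q)=2^{D_2(P||Q)}=\int_x P(x)^2/Q(x)\,\dd x$, which matches the integral above. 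Finally the variance identity is immediate from the first two: $\Var(\omega)=\EE[\omega^2]-(\EE[\omega])^2=d_2(P||Q)-1$.

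For a lemma this elementary the plan is essentially the proof, so there is no substantive obstacle; the only point requiring care — and thus the ``hard part,'' though minor — is the bookkeeping in simplifying the exponentiated Rényi divergence at $\alpha=2$, where the prefactor $1/(\alpha-1)$ collapses to $1$ and the outer $2^{(\cdot)}$ exactly cancels the $\log_2$, leaving the bare integral $\int_x P(x)^2/Q(x)\,\dd x$.
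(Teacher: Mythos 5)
Your proof is correct and is essentially the paper's own treatment of this statement: the paper states the lemma without proof, citing Lemma 1 of \citet{cortes2010learning}, and your direct computation --- canceling the sampling density $Q$ in $\EE[\omega]=\int P(x)\,\dd x=1$ and $\EE[\omega^2]=\int P(x)^2/Q(x)\,\dd x$, then checking that the paper's definition of $D_\alpha$ at $\alpha=2$ makes the latter integral equal $d_2(P||Q)$, and finishing with $\Var(\omega)=\EE[\omega^2]-(\EE[\omega])^2$ --- is exactly the standard argument behind that citation. Your explicit care about the two implicit conventions (expectations taken with $x\sim Q$, and absolute continuity $P\ll Q$ so that $\omega$ and the integrals are well defined) is the only nontrivial bookkeeping, and you have it right.
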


\begin{proof}[Proof of Lemma \ref{lemma:importance_sampling_variance}]
According to Lemma \ref{lemma:importance_weight_property}, we have
\begin{align*}
    \Var\big(\omega\big(\tau|\tbtheta^{\Oind},\btheta_{\Iind}^{\Oind+1}\big)\big)=d_2\big(p(\tau|\tbtheta^{\Oind})||p(\tau|\btheta_{\Iind}^{\Oind+1})\big)-1.
\end{align*}
In the rest of this proof, we denote $\btheta_1=\tbtheta^{\Oind}$ and $\btheta_2=\btheta_{\Iind}^{\Oind+1}$ to simplify the notation. By definition, we have
\begin{align*}
    d_2(p(\tau|\btheta_1)||p(\tau|\btheta_2))&=\int_{\tau}p(\tau|\btheta_1)\frac{p(\tau|\btheta_1)}{p(\tau|\btheta_2)}\dd \tau\\
    &=\int_{\tau}p(\tau|\btheta_1)^2p(\tau|\btheta_2)^{-1}\dd\tau.
\end{align*}
For any fixed $\btheta_2\in\RR^d$, computing the gradient of $d_2(p(\tau|\btheta_1)||p(\tau|\btheta_2))$ with respect to $\btheta_1$ yields
\begin{align*}
    &\nabla_{\btheta_1}d_2(p(\tau|\btheta_1)||p(\tau|\btheta_2))\\
    &=2\int_{\tau}p(\tau|\btheta_1)\nabla_{\btheta_1}p(\tau|\btheta_1)p(\tau|\btheta_2)^{-1}\dd\tau,
\end{align*}
which implies that if we set $\btheta_1=\btheta_2$, we will obtain
\begin{align*}
   &\nabla_{\btheta_1}d_2(p(\tau|\btheta_1)||p(\tau|\btheta_2))\big|_{\btheta_1=\btheta_2}\\
   &=2\int_{\tau}\nabla_{\btheta_1}p(\tau|\btheta_1)\dd\tau\big|_{\btheta_1=\btheta_2}\\
   &=0.
\end{align*}
Hence, applying mean value theorem, we have
\begin{align}\label{eq:variance_taylor_expansion}
    &d_2(p(\tau|\btheta_1)||p(\tau|\btheta_2))\\
    &=1+\frac{1}{2}(\btheta_1-\btheta_2)^{\top}\nabla_{\btheta}^2d_2(p(\tau|\btheta)||p(\tau|\btheta_2))(\btheta_1-\btheta_2),\notag
\end{align}
where $\btheta=t\btheta_1+(1-t)\btheta_2$ for some $t\in[0,1]$. Next, we compute the Hessian matrix. For any fixed $\btheta_2$, we have
\begin{align}\label{eq:variance_hes_renyiDiv}
    &\nabla_{\btheta}^2d_2(p(\tau|\btheta)||p(\tau|\btheta_2))\notag\\
    &=2\int_{\tau}\nabla_{\btheta}p(\tau|\btheta)\nabla_{\btheta}p(\tau|\btheta)^{\top}p(\tau|\btheta_2)^{-1}\dd\tau\notag\\
    &\quad+2\int_{\tau}\nabla^2_{\btheta}p(\tau|\btheta)p(\tau|\btheta)p(\tau|\btheta_2)^{-1}\dd\tau\notag\\
    &=2\int_{\tau}\nabla_{\btheta}\log p(\tau|\btheta)\nabla_{\btheta}\log p(\tau|\btheta)^{\top}\frac{p(\tau|\btheta)^2}{p(\tau|\btheta_2)}\dd\tau\notag\\
    &\quad+2\int_{\tau}\nabla^2_{\btheta}p(\tau|\btheta)p(\tau|\btheta)p(\tau|\btheta_2)^{-1}\dd\tau.
\end{align}
Recall the Hessian of the log-density function in \eqref{eq:variance_hes_logDen}. Substituting \eqref{eq:variance_hes_logDen} into \eqref{eq:variance_hes_renyiDiv} yields
\begin{align*}
    &\|\nabla_{\btheta}^2d_2(p(\tau|\btheta)||p(\tau|\btheta_2))\|_2\\
    &=\bigg\|4\int_{\tau}\nabla_{\btheta}\log p(\tau|\btheta)\nabla_{\btheta}\log p(\tau|\btheta)^{\top}\frac{p(\tau|\btheta)^2}{p(\tau|\btheta_2)}\dd\tau\notag\\
    &\quad+2\int_{\tau}\nabla^2_{\btheta}\log p(\tau|\btheta)\frac{p(\tau|\btheta)^2}{p(\tau|\btheta_2)}\dd\tau\bigg\|_2\\
    &\leq \int_{\tau}\frac{p(\tau|\btheta)^2}{p(\tau|\btheta_2)}\big(4\|\nabla_{\btheta}\log p(\tau|\btheta)\|_2^2\\
    &\quad+2\|\nabla^2_{\btheta}\log p(\tau|\btheta)\|_2\big)\dd\tau\\
    &\leq(4H^2\consLip^2+2H\consHes)\EE[\omega(\tau|\btheta,\btheta_2)^2]\\
    &\leq2H(2H\consLip^2+\consHes)(\consVarWeight+1),
\end{align*}
where the second inequality comes from Assumption \ref{assump:smooth} and the last inequality is due to Assumption \ref{assump:weight_variance} and Lemma \ref{lemma:importance_weight_property}. Therefore, by \eqref{eq:variance_taylor_expansion} we have
\begin{align*}
    \Var\big(\omega\big(\tau|\tbtheta^{\Oind},\btheta_{\Iind}^{\Oind+1}\big)\big)&=d_2\big(p(\tau|\tbtheta^{\Oind})||p(\tau|\btheta_{\Iind}^{\Oind+1})\big)-1\\
    &\leq C_{\omega}\|\tbtheta^{\Oind}-\btheta_{\Iind}^{\Oind+1}\|_2^2,
\end{align*}
where $C_{\omega}=H(2H\consLip^2+\consHes)(\consVarWeight+1)$.
\end{proof}

\section{EXPERIMENTS}\label{sec:experiment}
In this section, we conduct experiments on reinforcement learning benchmark tasks, i.e., the Cartpole and Mountain Car (continuous) environments \citep{opengymai2016}, to evaluate the performance of Algorithm \ref{alg:svrg_pg}. We measure the performance of an algorithm in terms of the total sample trajectories it needs to achieve a certain reward. We compare SVRPG with vanilla stochastic gradient based algorithms: the REINFORCE \citep{williams1992simple} and GPOMDP\footnote{We thank \citet{papini2018stochastic} for their implementations of GPOMDP and SVRPG as well as \citet{duan2016benchmarking} for their implementations from the \textit{rllab} library.} \citep{baxter2001infinite} algorithms. Recall that at each iteration of Algorithm \ref{alg:svrg_pg}, we also need to choose certain stochastic gradient estimator to approximate the full gradient based on sampled trajectories. Since the performance of GPOMDP is always comparable or better than REINFORCE, we only report the results of SVRPG with the GPOMDP estimator. 

We follow the practical suggestions provided in  \citet{papini2018stochastic} to improve the performance including (1) performing one initial gradient update immediately after sampling the $N$ trajectories in the outer loop; (2) using adaptive step sizes; and (3) using adaptive epoch length (terminate the inner loop update early if the step size used in the inner loop is smaller than that used in the outer loop). 
Following \citet{papini2018stochastic}, we use the following Gaussian policy with a fixed standard deviation $\tilde\sigma^2$:
\begin{align*}
    \pi_{\btheta}(a|s)=1/\sqrt{2\pi}\sigma\exp\big(-(\btheta^{\top}\phi(s)-a)^2/2\tilde\sigma^2\big),
\end{align*}
where $\phi:\cS\mapsto\RR^d$ is a bounded feature map. Under the Gaussian policy, it is easy to verify that Assumptions \ref{assump:smooth} and \ref{assump:bounded_variance} is satisfied with parameters depending on $\phi,\tilde\sigma^2$ and the upper bound of the action $a$ for all $a\in\cA$.

\begin{figure}[t]
    \label{figure-exp-cartpole}
    \centering
    \subfigure[Cartpole] {\includegraphics[scale=0.27]{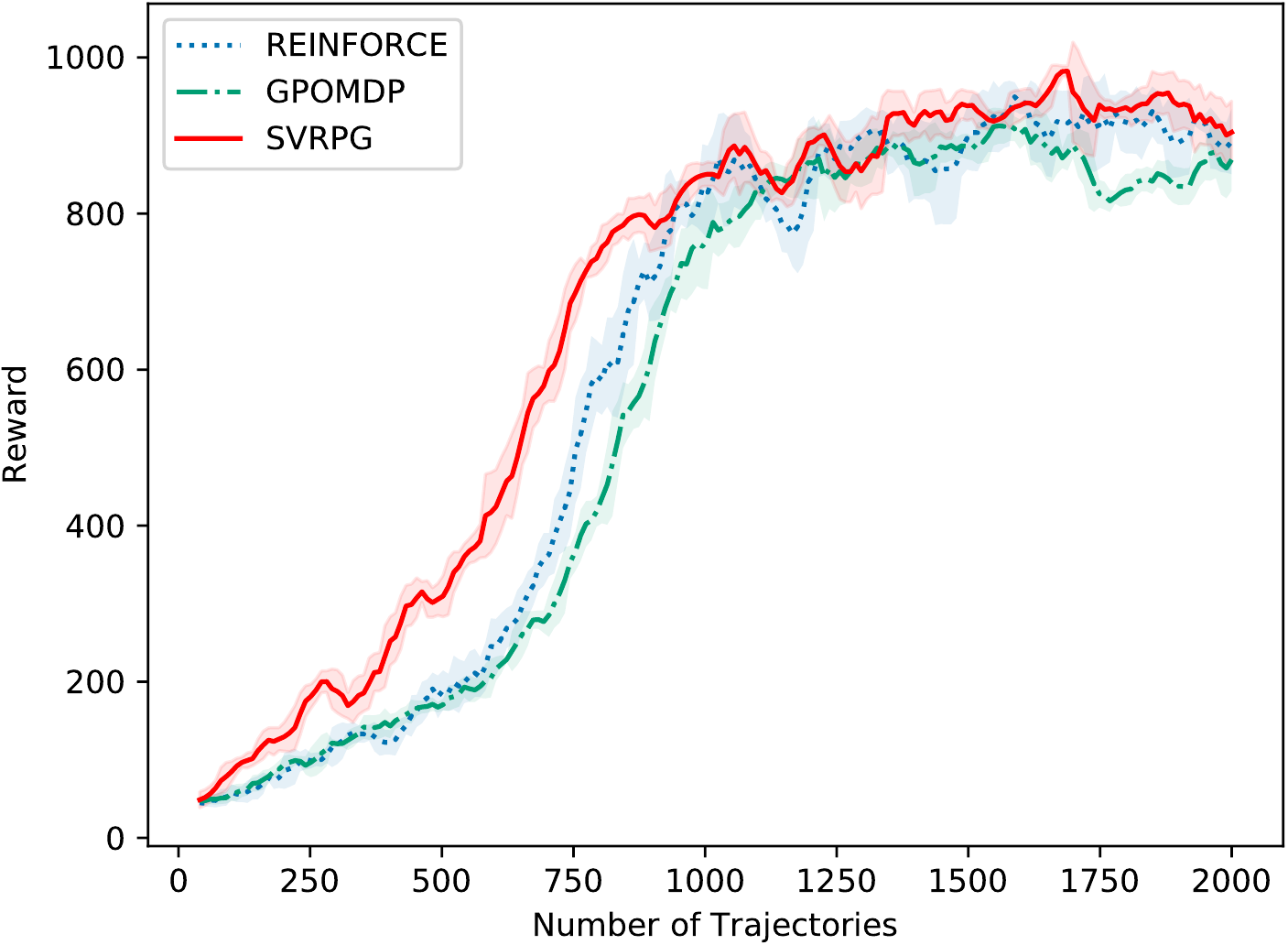}\label{fig:cartpole_methods}}
    \subfigure[Mountain Car]
    {\includegraphics[scale=0.27]{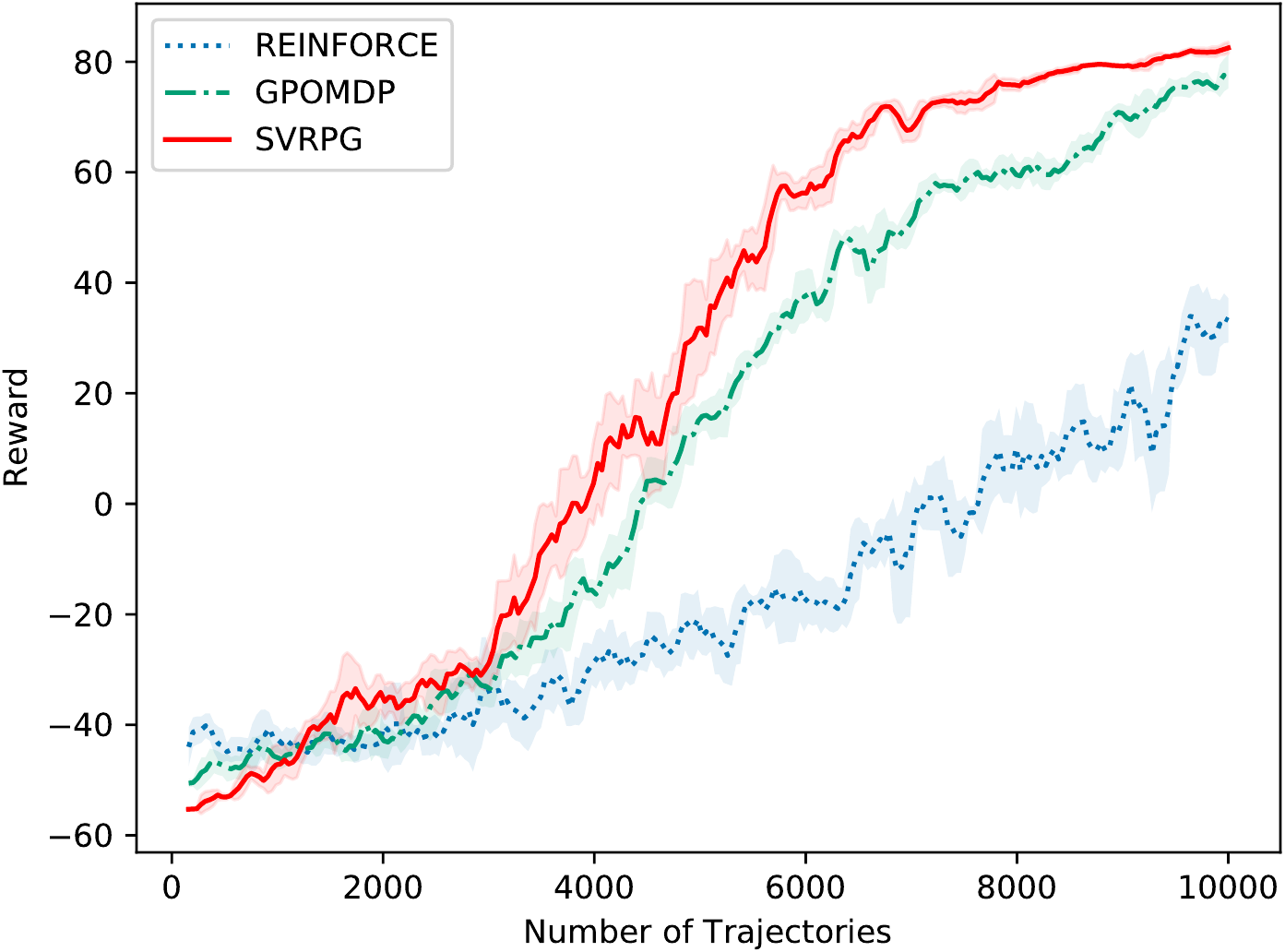}\label{fig:mountain_methods}}
    \caption{The average reward of different algorithms in Cartpole and Mountain Car environments.}
\end{figure}
\begin{figure}[t]
    \label{figure-vanilla}
    \centering
    \subfigure[Cartpole] {\includegraphics[scale=0.27]{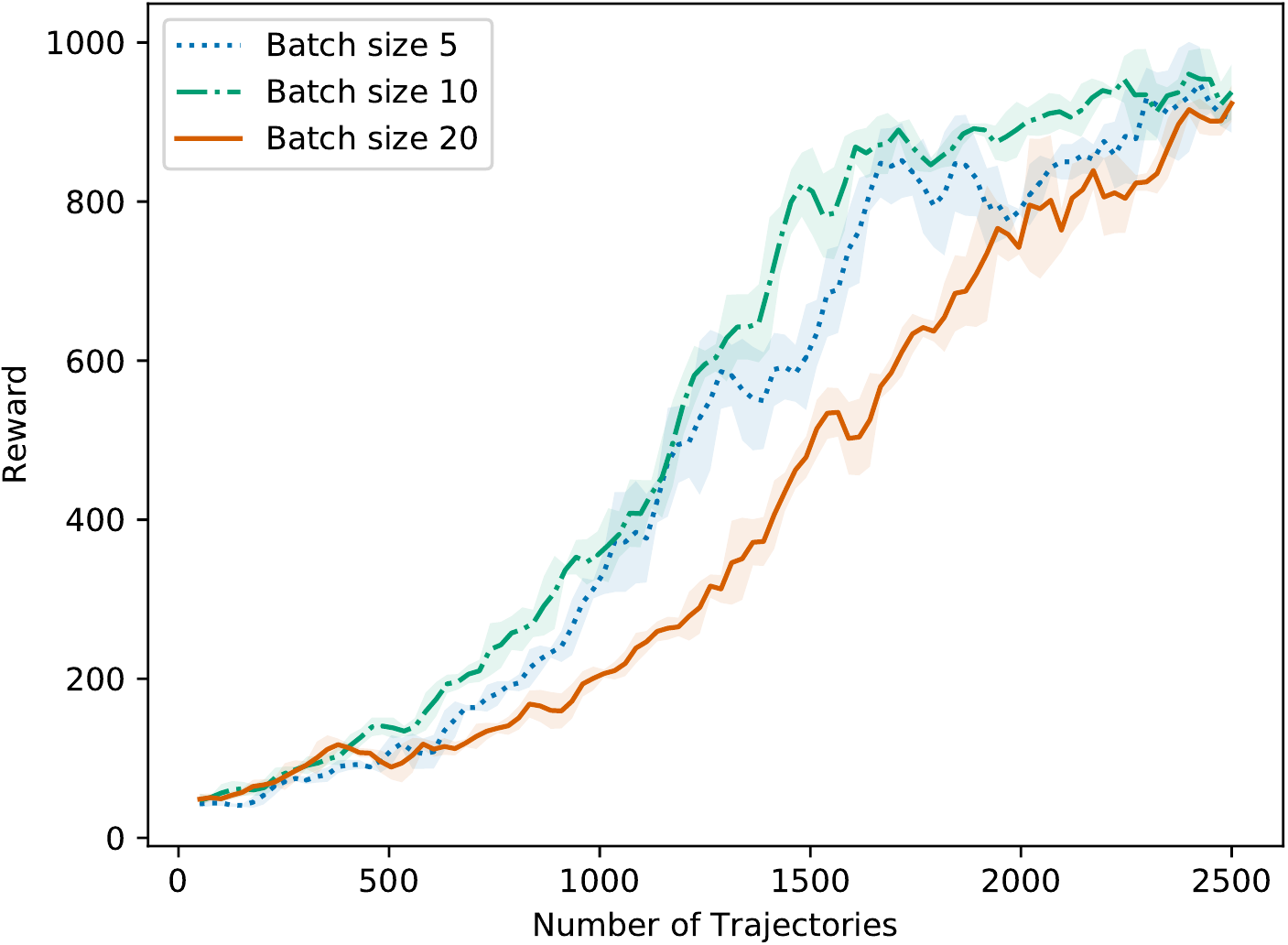}\label{fig:cartpole_batch}}
    \subfigure[Mountain Car]
    {\includegraphics[scale=0.27]{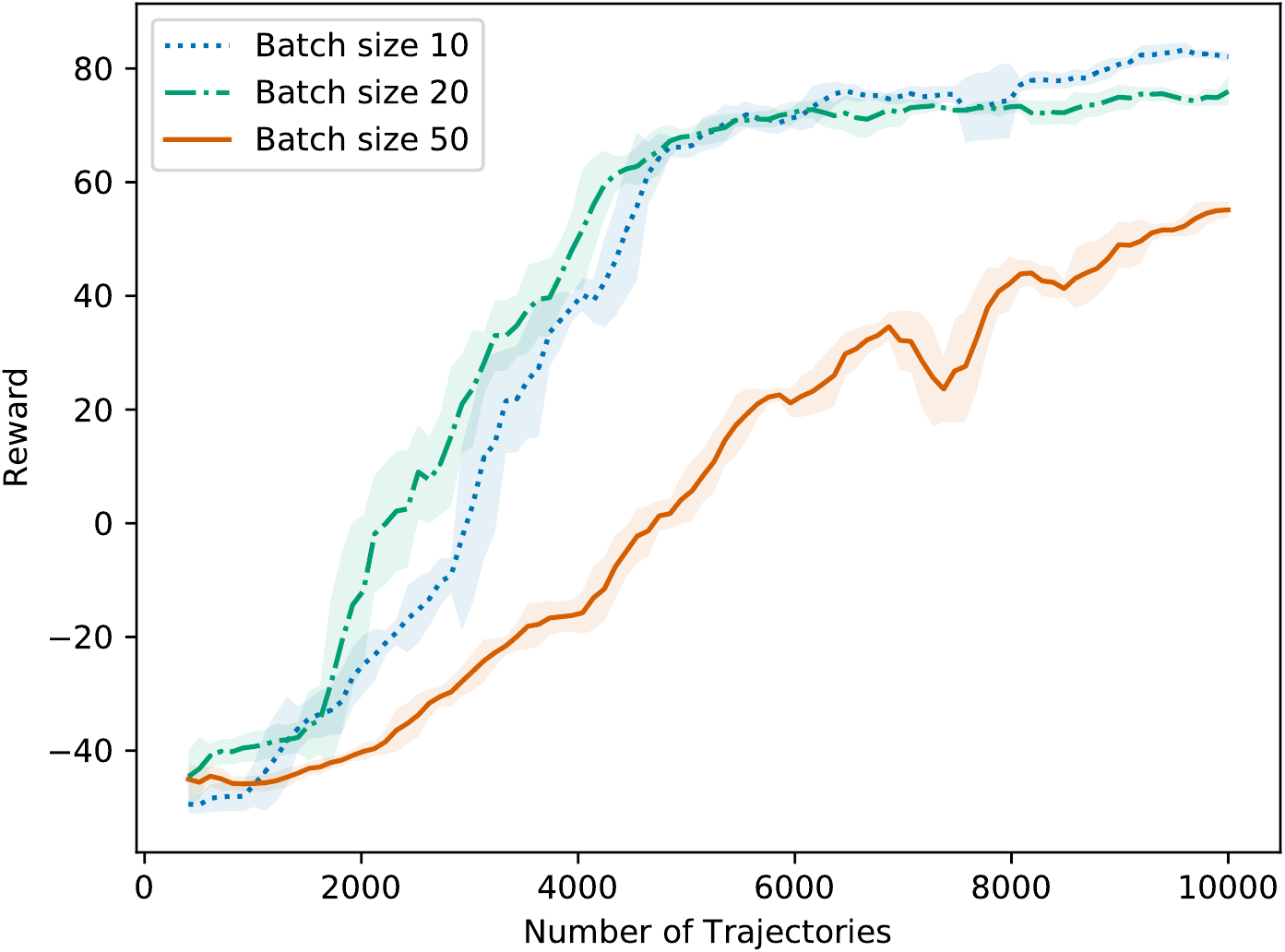} \label{fig:mountain_batch}}
    \caption{
    The average reward of SVRPG with different mini-batch sizes $B$. }
\end{figure}

\textbf{Cartpole Setup:} 
The neural network of the Cartpole environment has one hidden layer of $8$ nodes with the $tanh$ activation function. In the comparison between REINFORCE, GPOMDP, and SVRPG, we use learning rate $\eta = 0.01,0.01$ and $0.06$ for them respectively. Based on our theoretical analysis, we chose $N = 25$ and $B = 10$ for SVRPG. For the best comparison between the algorithms, we also the set the batch size of vanilla gradient methods to be $N = 10$. 

We also test the effectiveness of different mini-batch sizes within each epoch of SVRPG to validate our theoretical claims in Corollary \ref{coro:gradient_complexity}. 
We fix $N = 25$ and vary mini-batch sizes of $B = [5, 10, 20]$ respectively. As mini-batch size increases, we also scale learning rate proportionally such that $\eta = [0.01, 0.02, 0.03]$, corresponding to their respective mini-batch size. 

\textbf{Mountain Car Setup:}
The neural network for the Mountain Car environment contains one hidden layer with $16$ nodes with the $tanh$ activation. In the comparison among  REINFORCE, GPOMDP and SVRPG, we set $N = 100$ and $B = 20$ for SVRPG and set batch size $N = 20$ for the vanilla gradient methods. REINFORCE, GPOMDP, and SVRPG have respective learning rates of $\eta = [0.0025, 0.005, 0.0075]$. 

Similar to the experiments on Cartpole, we also investigate the effect of difference mini-batch sizes on SVRPG for Mountain Car. We conduct experiments on SVRPG by setting $N = 100$ and $B = [10, 20, 50]$ with corresponding learning rates of $\eta = [0.01, 0.01, 0.015]$.

\textbf{Experimental Results:}
Figures \ref{fig:cartpole_methods} and \ref{fig:mountain_methods} respectively show the performance of different algorithms on the Cartpole and Mountain Car environments. All the results are averaged over $10$ repetitions and the shaded area is a confidence interval corresponding to the standard deviation over different runs. It can be seen that all the methods solved the Cartpole environment (with averaged reward close to $1000$). The SVRPG algorithm outperforms the other two by gaining higher rewards with fewer sample trajectories. SVRPG also beats the other methods in solving the Mountain Car environment (with averaged reward close to $90$). However, the REINFORCE algorithm fails to solve the Mountain Car environment due to its high variance.



Figures \ref{fig:cartpole_batch} and \ref{fig:mountain_batch} show the effect of different mini-batch sizes $B$ on SVRPG. Note that the outer loop batch sizes of Cartpole and Mountain Car are $N=25$ and $N=100$. It can be seen that when $B=10$ and $B=20$ for Cartpole and Mountain Car respectively, SVRPG achieve the best performance, which is well aligned with our theory. In particular, with a small mini-batch size, SVRPG acts similarly to the vanilla stochastic gradient based algorithms which needs fewer trajectories in each iteration but converges slowly and requires more trajectories in total. Conversely, using a large mini-batch pushes SVRPG to converge in fewer iterations, but requires more trajectories in total. 

\section{CONCLUSIONS}\label{sec:conclusion}
We revisited the SVRPG algorithm \citep{papini2018stochastic} and derived a sharp convergence analysis of SVRPG which achieves the state-of-the-art sample complexity. We provided a detailed discussion and guidance on the choice of batch sizes and epoch length based on our improved analysis so that the total number of samples can be significantly reduced. We also empirically validated the theoretical results on common reinforcement learning tasks. As a future direction, it would be interesting to see whether any better sample complexity can be obtained for policy gradient algorithms.

\textbf{Acknowledgements}


We would like to thank the anonymous reviewers for their helpful comments. This research was sponsored in part by the National Science Foundation IIS-1904183 and IIS-1906169. The views and conclusions contained in this paper are those of the authors and should not be interpreted as representing any funding agencies.

{
\bibliographystyle{icml2019}
\bibliography{RL,optimization}
}
\end{document}